%% file: main.tex
\documentclass[11pt, a4paper, copyright, goog]{google}

\usepackage[authoryear, sort&compress, round]{natbib}
\input{math_commands.tex}

\input{macros}

\usepackage{hyperref}
\usepackage{url}
\usepackage{natbib} 
\usepackage{booktabs}
\usepackage{wrapfig}
\usepackage{enumitem}
\usepackage{comment}

\usepackage{fancybox}
\usepackage{booktabs}
\usepackage{makecell}
\usepackage{appendix}
\usepackage{hyperref}
\usepackage{tcolorbox}
\usepackage[draft]{fixme}
\usepackage{chngcntr}
\usepackage{longtable}
\usepackage{multirow}

\usepackage{amsfonts}
\usepackage{titlesec}
\usepackage{amsmath}
\usepackage{graphicx}
\usepackage{listings}
\usepackage{xcolor}
\usepackage{listings}
\usepackage{xcolor}

\usepackage{algorithm}

\usepackage[font=small,labelfont=bf]{caption}
\usepackage[capitalise]{cleveref}

\usepackage{mathtools}
\usepackage{algpseudocode}
\usepackage{placeins}

\newcommand{\fig}[1]{Fig.~\ref{#1}}
\newcommand{\sect}[1]{Sec.~\ref{#1}}
\newcommand{\tab}[1]{Tab.~\ref{#1}}

\usepackage{authblk}
\RequirePackage{authblk}
\renewcommand\Affilfont{\normalfont\fontsize{8}{10}\selectfont}
\makeatletter
\renewcommand\AB@affilsepx{, \protect\Affilfont}
\makeatother

\uselogo{} 

\title{Multi-agent cooperation through in-context co-player inference}

\author[$\star$,1]{Marissa A. Weis}
\author[$\star$,1]{Maciej Wołczyk}
\author[1]{Rajai Nasser}
\author[1]{Rif A. Saurous}
\author[1,2]{Blaise Agüera y Arcas}
\author[1]{João~Sacramento}
\author[1]{Alexander Meulemans}

\affil[1]{Google, Paradigms of Intelligence Team}
\affil[2]{Santa Fe Institute}
\affil[$\star$]{Equal contribution}

\begin{abstract}
Achieving cooperation among self-interested agents remains a fundamental challenge in multi-agent reinforcement learning. Recent work showed that mutual cooperation can be induced between ``learning-aware'' agents that account for and shape the learning dynamics of their co-players. However, existing approaches typically rely on hardcoded, often inconsistent, assumptions about co-player learning rules or enforce a strict separation between ``naive learners'' updating on fast timescales and ``meta-learners'' observing these updates. Here, we demonstrate that the in-context learning capabilities of sequence models allow for co-player learning awareness without requiring hardcoded assumptions or explicit timescale separation. We show that training sequence model agents against a diverse distribution of co-players naturally induces \textit{in-context best-response} strategies, effectively functioning as learning algorithms on the fast intra-episode timescale. We find that the cooperative mechanism identified in prior work—where vulnerability to extortion drives mutual shaping—emerges naturally in this setting: in-context adaptation renders agents vulnerable to extortion, and the resulting mutual pressure to shape the opponent's in-context learning dynamics resolves into the learning of cooperative behavior. 
Our results suggest that standard decentralized reinforcement learning on sequence models combined with co-player diversity provides a scalable path to learning cooperative behaviors.
\end{abstract}

\begin{document}

\maketitle

\section{Introduction}

The development of foundation model agents is rapidly shifting the landscape of artificial intelligence from isolated systems to interacting autonomous agents \citep{xi_rise_2023, park_generative_2023, aguera2026silicon}. As these sequence-model-based agents are deployed in increasingly complex environments, they inevitably face multi-agent interactions where outcomes depend on interactions of multiple entities. Because these interactions frequently involve competing goals, 
ensuring that self-interested agents robustly cooperate in mixed-motive settings remains an important open challenge, even as individual agent capabilities have grown significantly.

Decentralized Multi-Agent Reinforcement Learning (MARL) addresses the problem of learning to interact with other agents while only having access to local observations. However, decentralized MARL is challenging due to two primary factors: equilibrium selection and non-stationarity of the environment \citep{shoham_multiagent_2008, hernandez-leal_survey_2017}. In general-sum games, many Nash equilibria may exist, and agents independently optimizing their own rewards frequently converge to suboptimal outcomes, such as mutual defection in social dilemmas \citep{claus_dynamics_1998, foerster_learning_2018}. Furthermore, from the perspective of a single agent, the environment is non-stationary because other agents are simultaneously learning and adapting their policies \citep{hernandez-leal_survey_2017}. Since standard single-agent reinforcement learning (RL) algorithms assume stationarity, they often fail to learn effective policies in these decentralized settings \citep{foerster_learning_2018, claus_dynamics_1998}.

To address this non-stationarity, \textit{co-player learning awareness} enables agents to anticipate the learning dynamics of other agents and shape their co-players' learning toward more beneficial equilibria~\citep{foerster_learning_2018, lu_model-free_2022, duque2024advantage, aghajohari2024loqa, cooijmans_meta-value_2023, balaguer_good_2022, khan_scaling_2024, willi_cola_2022, xie_learning_2021, meulemans2024multi, aghajohari2024best, segura2025opponent, piche2025learning}. These approaches generally fall into two categories. The first explicitly models the co-player's learning update, estimating a shaping gradient by differentiating through the opponent's update step \citep{foerster_learning_2018, duque2024advantage, aghajohari2024loqa, cooijmans_meta-value_2023, willi_cola_2022, aghajohari2024best, piche2025learning}. However, this requires rigid assumptions about the opponent's learning rule and creates inconsistencies if the opponent is also learning-aware. The second category implicitly learns to shape opponents by extending the RL time horizon to encompass multiple update steps of the co-player \citep{lu_model-free_2022, khan_scaling_2024, meulemans2024multi, segura2025opponent}. While effective, this requires a separation of agents into ``naive learners'' (who update parameters frequently) and ``meta-learners'' (who update slowly), effectively treating the interaction as a meta-learning problem \citep{bengio_learning_1990, schmidhuber_evolutionary_1987, hochreiter_learning_2001}.

\citet{meulemans2024multi} describe a three-step mechanism explaining why co-player learning awareness leads to the learning of cooperative behaviors among self-interested agents:
\begin{enumerate}[leftmargin=*]
    \item \textbf{Extortion of naive learners:} The optimal strategy against a naive learner (an agent updating its policy to maximize rewards on a fast timescale) is extortion \citep{press_iterated_2012}. A learning-aware meta-agent shapes the interaction so the naive learner updates its policy towards more cooperation, allowing the meta-agent to exploit the resulting behavior.%
    \item \textbf{Mutual extortion leads to cooperation:} When two agents with such extortionate capabilities face each other, their attempts to shape the learning of one another result in both agents learning more cooperative strategies.
    \item \textbf{Heterogeneity is key:} Consequently, cooperation emerges when agents are trained in a mixed population of naive learners and learning-aware agents. Interactions with naive learners provide the gradient pressure to learn extortion (avoiding mutual defection), while interactions with learning-aware agents refine this into mutual cooperation.
\end{enumerate}

We argue that the complex mechanisms employed by current co-player learning-aware methods, such as explicit naive learners and meta learners, or differentiating through co-players' learning updates, are unnecessary for learning cooperative behaviors. %
We hypothesize that training sequence model agents via decentralized MARL against a diverse distribution of co-players naturally yields \textit{in-context best-response} policies. These policies exhibit goal-directed adaptation through in-context learning within a single episode. Crucially, we show that this acts as a functional drop-in replacement for the ``naive learner'' parameter updates of prior work. Because in-context learning occurs on a fast timescale within the episode, agents become susceptible to extortion by other learning agents using in-weight updates. Consequently, the cooperative gradient dynamics identified by \citet{meulemans2024multi} emerge: gradients incentivizing the extortion of in-context learners pull agents away from pure defection, while mutual extortion gradients drive them toward cooperation.%

Our contributions are as follows. We introduce a decentralized MARL setup where sequence model agents are trained against a mixed pool of diverse co-players and demonstrate that this training distribution induces strong in-context co-player inference capabilities and thereby the mutual extortion pressures leading to cooperation. We show that this setup leads to robust cooperation in the Iterated Prisoner's Dilemma without the distinction between meta and inner trajectories, or assumptions about opponent learning rules. By bridging in-context learning and co-player learning-awareness, we provide a scalable path toward cooperative multi-agent systems using standard sequence modeling and RL. We introduce a new RL method that leverages self-supervised learning of predictive sequence models, which is well-suited to learn the in-context best-response policies required for the mixed pool training. We provide a theoretical characterization of the training equilibrium of this method, and relate it to Nash equilibria and subjective embedded equilibria \citep{meulemans2025embedded}.

\section{Problem setup and methods}

\textbf{Partially observable stochastic games}. 
We formalize the multi-agent interaction as a partially observable stochastic game \citep[POSG;][]{kuhn_extensive_1953} of $N$ agents. Each agent $i$ receives at each timestep an observation $o^i_t \in \mathcal{O}^i$ and reward $r^i_t \in \mathcal{R}^i$, and executes an action $a^i_t \in \mathcal{A}^i$, with $\mathcal{O}^i$, $\mathcal{R}^i$ and $\mathcal{A}^i$ being finite sets. Policies are conditioned on the interaction history $x^i_{\leq t} = \{(o^i_k, a^i_{k-1}, r_{k-1}^i)\}_{k=1}^{t}$. We denote the policy of agent $i$ as $\pi^i(a^i_t \mid x^i_{\leq t}; \phi^i)$, parameterized by $\phi^i$.

\textbf{The iterated prisoner's dilemma}. 
We focus on the Iterated Prisoner's Dilemma (IPD), a canonical model for studying cooperation among self-interested agents \citep{rapoport_prisoners_1974,axelrod_evolution_1981}. In each round $t$, two agents choose simultaneously to cooperate (C) or defect (D), i.e., $a_t^i \in \{\mathrm{C}, \mathrm{D}\}$, receiving payoffs as detailed in~\tab{tab:payoff_ipd}. This structure creates a social dilemma: in a single-shot game, mutual defection is the unique Nash equilibrium, even though mutual cooperation yields higher global and individual returns. While the infinitely iterated game allows for cooperative Nash equilibria, converging to these equilibria via decentralized reinforcement learning remains challenging \citep{foerster_learning_2018, claus_dynamics_1998}. For computational tractability, we approximate the infinite horizon with a fixed horizon of $T=100$ steps, which is sufficient for the small-scale policy networks used in this work to approximate infinite-horizon behavior.

\textbf{Mixed pool training}. 
To induce robust in-context inference capabilities, we train agents within a mixed population rather than against a single fixed opponent. The training pool consists of (i)~\textbf{Learning Agents} which use a sequence model policy that processes the full episode history~$x^i_{\leq t}$ and whose parameters are learned during training, and (ii)~static \textbf{Tabular Agents} parameterized by a 5-dimensional vector, defining the probability of cooperating in the initial state and in response to the four possible joint action outcomes of the previous turn $(a_{t-1}^i, a_{t-1}^{-i})$.
During training, a learning agent plays $50\%$ of its episodes against another learning agent and $50\%$ against a tabular agent sampled uniformly from the parameter space. Crucially, agents do not receive agent identifiers; they must infer the nature and strategy of their opponent solely from the interaction history $x^i_{\leq t}$.

We investigate two learning algorithms for the learning agents in our pool:

\textbf{Independent A2C.} We employ Advantage Actor-Critic (A2C)~\citep{mnih_asynchronous_2016} as a standard decentralized model-free RL method. Each agent independently optimizes its policy parameters $\phi^i$ to maximize its own expected return, treating the other agents as part of the environment.

\textbf{Predictive Policy Improvement (PPI).} 
We introduce a model-based algorithm that leverages a sequence model predicting the joint sequence of actions, observations, and rewards, serving simultaneously as a world model and a policy prior. This method is a variation of Maximum A-Posteriori Policy Optimization~\citep[MPO]{abdolmaleki2018maximum}, inspired by the MUPI framework for multi-agent learning \citep{meulemans2025embedded}, and enables efficient learning of in-context inference mechanisms through self-supervised training. 
Each iteration consists of (i) gathering data with the improved policy and (ii) retraining the sequence model on the newly gathered data, similar to classical policy iteration. We define the improved policy $\pi^i(a^i|x^i_{\leq t})$ as follows:
\begin{equation}
    \pi^i(a^i|x^i_{\leq t}) \propto p^i_{\phi^i}(a^i|x^i_{\leq t}) \cdot \exp\left(\beta \hat{Q}_{p^i}(x^i_{\leq t},a^i)\right),
\end{equation}
where $\beta$ is an inverse temperature hyperparameter. The action value $\hat{Q}_p(h,a)$ is estimated via Monte Carlo rollouts performed within the sequence model $p_\phi$. 
We deploy this improved policy $\pi^i(a^i|x^i_{\leq t})$ in the games interacting with other agents, collecting a new batch of trajectories. We end the iteration by retraining the sequence model $p^i_{\phi^i}$ on all accumulated trajectory batches of the current and previous iterations, distilling the improved behavior of $\pi^i$ into the parameters $\phi^i$. We initialize the sequence model $p_\phi$ by pretraining on interactions between randomly sampled tabular agents. Refer to App.~\ref{app:methods} for the implementation details, App.~\ref{app:ppi_derivation} for a theoretical derivation and motivation of PPI, and App.~\ref{app:predictive-equilibrea} for a theoretical analysis of the equilibrium behavior of PPI agents.

\section{Results}

\begin{figure}[t]
    \centering
    \includegraphics[width=\textwidth]{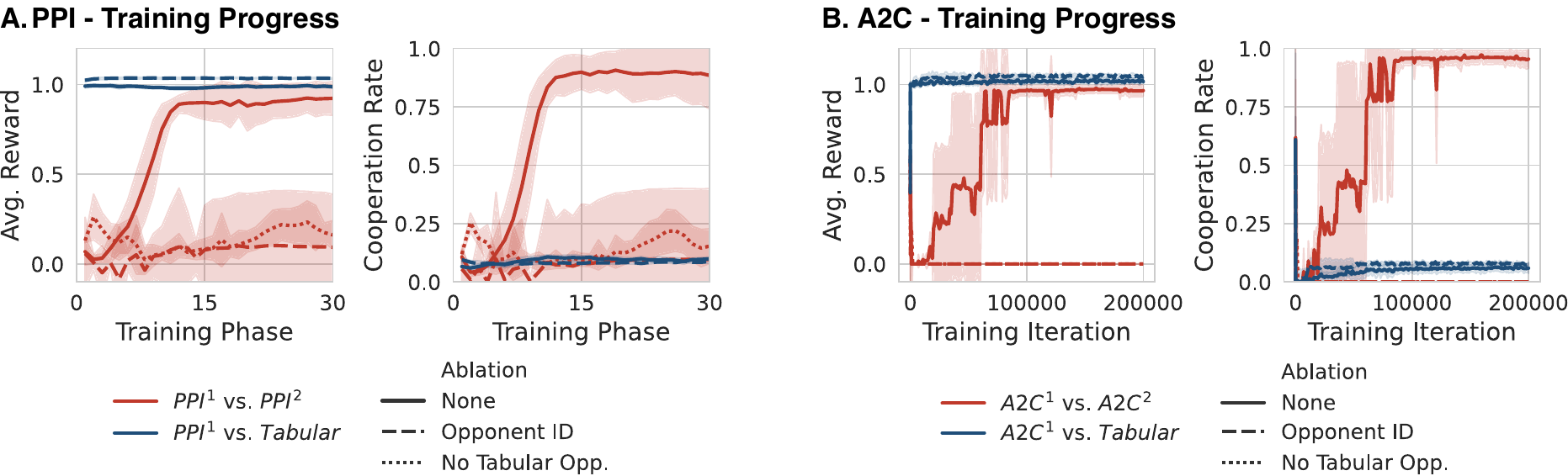}
    \caption{\textbf{Mixed training leads to robust cooperation.} RL agents trained against a mix of tabular policies and learning agents converge to cooperation (solid lines). \textbf{Ablations:} Agents trained purely against other learning agents (dotted lines) or with access to explicit co-player identifications (dashed lines) converge to defection, highlighting that in-context inference is a critical factor for the learning of cooperative behaviors with standard decentralized MARL. Error bars indicate standard deviation across 10 random seeds.}
    \label{fig:mixed_training}
\end{figure}

Our central hypothesis is that training the learning agents against a diverse distribution of co-players necessitates the development of two distinct capabilities: (i) \emph{inferring} the co-player's policy from interaction history, and (ii) \emph{adapting} to a best response within a single episode. We posit that this \emph{in-context best-response policy} makes the agent vulnerable to extortion, reproducing the ``naive learner'' dynamics described by \citet{meulemans2024multi}. This leads to learning pressures towards extortion policies, and subsequently, the mutual extortion between learning agents drives the agents toward cooperative policies. Interestingly, in this setup, the learning agents simultaneously occupy two roles traditionally separated in the literature: they are ``naive learners'' on the fast timescale (via in-context learning) and ``learning-aware agents'' on the slow timescale (via weight updates).

In this section, we first demonstrate that mixed-pool training indeed leads to robust cooperation without explicit time-scale separations or meta-gradient machinery. We then dissect the underlying mechanism, showing that (1) mixed pool training induces in-context best-response policies, (2) these policies are vulnerable to extortion, and (3) mutual extortion pressures resolve into learning cooperative behaviors.

\subsection{Mixed training induces robust cooperation}\label{sec:mixed}

As shown in Figure~\ref{fig:mixed_training}, both PPI and A2C agents trained in the  mixed pool setup converge to cooperation in IPD. To verify this stems from the dynamics of in-context opponent inference, we perform two ablations:
 \textbf{(1) Explicit Identification:} We condition the policy on the opponent's policy parameters (for tabular opponents) or identity flag (for other learning agents) at the start of the episode, removing the need for in-context opponent inference. \textbf{(2) No mixed pool training:} We train agents solely against a single other learning agent (without the tabular agent pool or structured pretraining). Without diverse opponents, agents have no incentive to develop general-purpose in-context learning mechanisms.
In both ablations, agents collapse to mutual defection (c.f.~\fig{fig:mixed_training}; dashed and dotted curves). This confirms that in-context learning mechanisms—induced by the necessity to identify diverse opponents—are a critical factor enabling cooperative outcomes. Refer to App.~\ref{app:abl} for the ablation details.

\begin{figure}[t]
    \centering
    \includegraphics[width=0.91\textwidth]{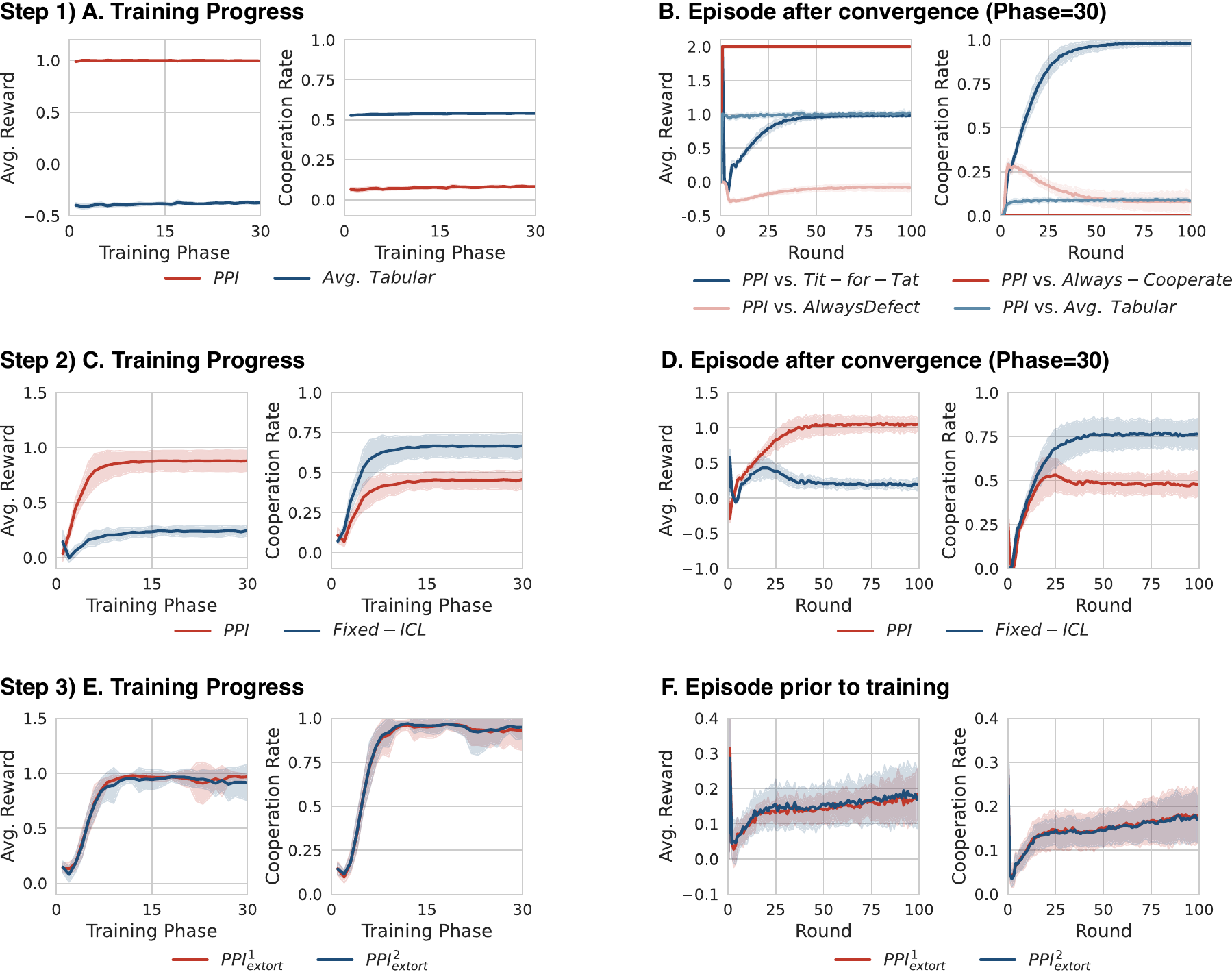}
    \caption{\textbf{A--B: Emergence of in-context best response.} Performance of PPI agents (trained against random tabular opponents) when evaluated against specific fixed strategies. The agents demonstrate in-context learning, identifying the opponent and converging to the best response within the episode. \textbf{C--D: Learning to extort in-context learners.} Agents trained against a ``Fixed In-Context Learner'' (an agent pre-trained in Step~1 to best-respond to tabular policies) learn to extort it. The RL agent achieves a higher share of the reward by exploiting the in-context adaptation of its opponent. \textbf{E--F: From mutual extortion to cooperation.} When two agents initialized with extortion policies (from Step~2) play against each other, their mutual attempts to extort their co-player result in the shaping of each other's policy towards more cooperative behavior, both within episodes through in-context learning (\textbf{F}) and across episodes through in-weight learning (\textbf{E}). Error bars indicate standard deviation across 10 random seeds.}
    \label{fig:best_response}
\end{figure}

\FloatBarrier

\subsection{Mechanism analysis: From in-context learning to cooperation}

We now deconstruct the learning of cooperative behavior into three distinct steps, validating the causal chain from diversity to in-context learning, to extortability, and finally to cooperation.

\textbf{Step 1: Diversity induces in-context best-response mechanisms.}
First, we verify that training against the tabular pool cultivates in-context learning. We evaluate a PPI agent trained solely against the tabular agents pool. Figure~\ref{fig:best_response}\textbf{B} plots the agent's performance against specific tabular policies over the course of an episode. The agent rapidly adapts to the best response for the specific opponent. This confirms the emergence of \emph{in-context best-response mechanisms} that perform goal-directed adaptation on the fast timescale of the episode.

\textbf{Step 2: In-context learners are vulnerable to extortion.}
Next, we establish that such in-context best-response policies are susceptible to shaping by other co-players. We freeze the agent from Step~1, termed the "Fixed In-Context Learner" (Fixed-ICL), and train a new PPI agent solely against it. The new agent learns to \emph{extort} the Fixed-ICL policy~(\fig{fig:best_response}\textbf{C}\&\textbf{D}) \citep{press_iterated_2012}. By exploiting the Fixed-ICL's tendency to adapt, the new agent forces it into unfair cooperation, maximizing the new agent's own reward at the expense of the Fixed-ICL. This confirms that goal-directed adaptation within the episode provides the necessary gradient signal for opponents to learn extortionate behaviors via weight updates.

\textbf{Step 3: Mutual extortion drives cooperation.}
We initialize two agents with the extortion policies learned in Step~2 and train them against each other. Within an episode, both extortion policies shape each others in-context learning dynamics into more cooperative behavior~(\fig{fig:best_response}\textbf{F}). This push towards more cooperation is then picked up by the parameter updates, further driving both policies towards cooperative behavior~(\fig{fig:best_response}\textbf{E}), mirroring the "mutual shaping" effect observed in explicit learning-aware methods \citep{lu_model-free_2022, meulemans2024multi}.

\textbf{Step 4: Synthesis in mixed populations.}
Mixed-pool training combines these dynamics by forcing agents to maintain in-context adaptation for tabular opponents, which renders them vulnerable to mutual extortion by other learners, ultimately driving the learning agents toward cooperation through mutual extortion~(\sect{sec:mixed}; \fig{fig:mixed_training} \& \fig{fig:mixed_training_in_eps}). Figure \ref{fig:a2c_best_response} in Appendix~\ref{app:a2c_results} shows similar results for A2C learning agents.

\section{Conclusion}
In this work, we have demonstrated that the complex machinery of explicit co-player learning-awareness—such as meta gradients or rigid timescale separation—is not required to learn cooperative behaviors in general-sum games. Instead, we found that simply training agents against a diverse distribution of co-players suffices to induce in-context best-response strategies. This in-context learning renders agents susceptible to shaping and consequently driving them toward cooperative behaviors through mutual extortion dynamics. Crucially, this result bridges the gap between multi-agent reinforcement learning and the training paradigms of modern foundation models. Since foundation models naturally exhibit in-context learning and are trained on diverse tasks and behaviors, our findings suggest a scalable and computationally efficient path for the emergence of cooperative social behaviors using standard decentralized learning techniques.

\section*{Acknowledgments}
We would like to thank Guillaume Lajoie, Angelika Steger and the Google Paradigms of Intelligence team for feedback and insightful discussions.

\input{main.bbl}
\include{appendix}

\end{document}

%% file: math_commands.tex
\usepackage{amsmath,amsfonts,bm}

\def\eqref#1{equation~\ref{#1}}

\def\1{\bm{1}}

\DeclareMathAlphabet{\mathsfit}{\encodingdefault}{\sfdefault}{m}{sl}
\SetMathAlphabet{\mathsfit}{bold}{\encodingdefault}{\sfdefault}{bx}{n}

\DeclareMathOperator*{\argmin}{arg\,min}

%% file: macros.tex
\usepackage{thm-restate, amsmath, amssymb, amsfonts, mathtools, amsthm}

\usepackage{tikz}
\usepackage{xcolor} %

\usepackage{comment}

\theoremstyle{plain}
\newtheorem{theorem}{Theorem}[section]

\newtheorem{lemma}[theorem]{Lemma}

\newtheorem{corollary}[theorem]{Corollary}

\theoremstyle{definition}
\newtheorem{definition}[theorem]{Definition}

\DeclareMathSymbol{\smin}{\mathbin}{AMSa}{"39}

\newcommand{\bbP}{\mathbb{P}}

\newcommand{\expect}[2]{\mathbb{E}_{#1}\left[ #2 \right]}

\newcommand{\kl}[2]{\mathrm{KL}\left(#1 \mid \mid #2\right)}

\newcommand{\thetavec}{\bm{\theta}}

%% file: appendix.tex
\FloatBarrier
\newpage
\appendix

\section{Additional details on methods}\label{app:methods}

\subsection{Partially observable stochastic games}
We formalize the multi-agent interaction as a partially observable stochastic game \citep[POSG;][]{kuhn_extensive_1953} defined by the tuple $(\mathcal{I}, \mathcal{S}, \mathcal{A}, P_t, P_r, P_i, \mathcal{O}, P_o, \gamma, T)$. Here, $\mathcal{I} = \{1,\dots,n\}$ is the set of $n$ agents. At each time step $t$, the environment is in state $s_t \in \mathcal{S}$. Agents simultaneously select actions from the joint action space $\mathcal{A} = \times_{i \in \mathcal{I}} \mathcal{A}^i$, transitioning the environment according to $P_t(S_{t+1} \mid S_t, A_t)$. The initial state is sampled from $P_i(s_0)$. Each agent $i$ receives a reward $r^i_t$ from the joint factorized distribution $P_r = \times_{i \in \mathcal{I}}P_r^i(r^i \mid s, a)$, and an observation $o^i_t$ from the observation space $\mathcal{O}=\times_{i \in \mathcal{I}} \mathcal{O}^i$ via the distribution $P_o(o_t \mid s_t, a_{t-1})$. %
We denote the discount factor by $\gamma$ and the horizon by $T$.
We use the superscript $i$ to denote variables specific to agent $i$, and $-i$ for the remaining agents. %
Policies are conditioned on the interaction history $x^i_{\leq t} = \{(o^i_k, a^i_{k-1}, r_{k-1}^i)\}_{k=1}^{t}$. We denote the policy of agent $i$ as $\pi^i(a^i_t \mid x^i_{\leq t}; \phi^i)$, parameterized by $\phi^i$.

\subsection{Environment}

\begin{wraptable}{r}{6cm}
\renewcommand{\arraystretch}{1.5}
\caption{Single-round IPD payoff matrix}
\label{tab:payoff_ipd}
\centering
\begin{tabular}{cccc}
 & & \multicolumn{2}{c}{\textbf{Player 2}} \\
 & \multicolumn{1}{c|}{} & \textbf{C} & \textbf{D} \\ \cline{2-4}
\multirow{2}{*}{\textbf{Player 1}} & \multicolumn{1}{c|}{\textbf{C}} & (1, 1) & (-1, 2) \\
 & \multicolumn{1}{c|}{\textbf{D}} & (2, -1) & (0, 0) \\
\end{tabular}
\end{wraptable}

\paragraph{Iterated Prisoners Dilemma (IPD)} In each round both agents can output two possible actions: cooperate ($C$) and defect ($D$). As such, the environment emits five possible observations: the initial observation $s_0$ and four observations based on the actions the two players took in the previous round: $(C, C), (C, D), (D, C), (D, D)$. The state~$s_t$ is then comprised of all past observations~$o_{\leq t}$.  While the tabular agents are only conditioned on the latest observation $o_t$, the PPI and A2C agents leverage the full history~$x_{\leq t}$. Each game consists of 100 rounds. Each agent observes the state of the previous round from a first person view, i.e., its own action is enumerated first. In every round, each agent receives a reward following the payoff matrix in \tab{tab:payoff_ipd}.

\begin{algorithm}[t]
\caption{Predictive Policy Improvement}
\label{alg:seq_model_rl}
\begin{algorithmic}[1]
\Require Initial sequence model $p_{\phi_0}$, reinforcement learning environment~$\mathcal{E}$, number of iterations~$N_{\text{iter}}$, number of training epochs~$N_{\text{epochs}}$, number of samples~$N_{\text{samples}}$, initial dataset~$\mathcal{D}_0$

\For{$k = 1$ to $N_{\text{iter}}$}
    \State Initialize weights $\phi_k$ of $p_{\phi_k}$ randomly
    \For{$e = 1$ to $N_{\text{epochs}}$} \Comment Step 1: Train sequence model
        \State Update parameters of $p_{\phi_k}$ using $\mathcal{D}_{k-1}$ to minimize loss function $L_{train}$ in Eq.~\ref{eq:loss}
    \EndFor

    \State Initialize empty dataset $\mathcal{R}_k$.
    \For{$r = 1$ to $N_{\text{samples}}$} \Comment Step 2: Collect game trajectories
        \State Reset environment $\mathcal{E}$.
        \State Generate a sequence of actions/observations using $p_{{\phi_k}}$ within $\mathcal{E}$.
        \State Collect trajectory $\tau_r = (o_0, r_0, a_0, o_1, r_1, a_1, \dots)$ from $\mathcal{E}$.
        \State Add $\tau_r$ to $\mathcal{R}_k$.
    \EndFor
    \State Set $\mathcal{D}_{k}\leftarrow \mathcal{D}_{k-1}\cup\mathcal{R}_k$ for the next iteration's training.
\EndFor
\end{algorithmic}
\end{algorithm}

\subsection{Agent implementations}

\subsubsection{PPI agents}
Predictive Policy Improvement (PPI) agents, our practical approximation of embedded Bayesian agents~\citep{meulemans2025embedded}, combine a learned sequence model with a planning-based policy improvement mechanism.

\paragraph{Sequence Model Architecture.} The sequence model is a Gated Recurrent Unit (GRU) with a 128-dimensional hidden state. Inputs—comprising observations, actions, and rewards—are processed via modality-specific linear layers to project them into a shared 32-dimensional embedding space; observations and actions are one-hot encoded prior to projection. These embeddings serve as inputs to the GRU, and we apply the Swish activation function \citep{ramachandran2017searching} on the output. Distinct linear output heads decode the hidden states to predict future tokens for each modality.

\paragraph{Training Objectives.} We train the sequence model iteratively for 30 phases. In each phase, the model parameters $\phi$ are re-initialized and trained on a dataset of interaction histories $\mathcal{D} = \{x^{(n)}\}^{N}_{n=1}$ to minimize the next-token prediction loss:

\begin{align}\label{eq:loss}
L_{\text{train}} &= \lambda_{\text{obs}} L_{\text{obs}} + \lambda_{\text{act}} L_{\text{action}} + \lambda_{\text{rew}} L_{\text{reward}} \,,\\
L_{\text{obs}} &= -\frac{1}{NT} \sum_{n=1}^{N} \sum_{t=1}^{T} \log p_\phi(o^{(n)}_t | x_{\leq t-1}^{(n)}) \,,\\
L_{\text{reward}} &= -\frac{1}{NT} \sum_{n=1}^{N} \sum_{t=1}^{T} \log p_\phi(r^{(n)}_t | x_{\leq t-1}^{(n)}, o^{(n)}_{t})\,,\\
L_{\text{action}} &= -\frac{1}{NT}  \sum_{n=1}^{N} \sum_{t=1}^{T} \log p_\phi(a^{(n)}_t | x_{\leq t-1}^{(n)}, o^{(n)}_{t}, r^{(n)}_{t})\,.
\end{align}
$\mathcal{D}$ comprises of the interaction histories from all previous and current phases. This is a common strategy in performative prediction \citep{perdomo2020performative} to ensure more stable training of the prediction model.

We model $p_\phi(a_t \mid x_{\leq t})$ and $p_\phi(o_t \mid x_{<t}, a_{t-1})$ using a categorical distribution, yielding a standard categorical cross-entropy loss and we model $p_\phi(r_t \mid x_{<t}, a_{t-1}, o_t)$ with a normal distribution with fixed variance, yielding the mean-square error loss $(r - \hat{r})^2$.
In each phase, we sample $20\,000$ trajectories, which are concatenated with samples from previous phases for joint training of the sequence model.
Optimization is performed using AdamW \citep{loshchilov2017decoupled} (learning rate $10^{-4}$, weight decay $10^{-2}$, $\beta_1=0.9$, $\beta_2=0.98$) for 10 epochs with a batch size of 256. Gradients are clipped at a norm of 1.0.

\paragraph{Pre-training} The sequence model is pretrained on an initial dataset $\mathcal{D}_0$ of $200\,000$ sample trajectories of two random tabular agents playing IPD against each other for 100 rounds using the same training hyperparameters as outlined above.

\paragraph{Inference} During deployment, the agent estimates Q values by performing Monte Carlo roll-outs for 15 rounds into the future using the learned sequence model as a simulator. The final action selection follows a policy $\pi(a | x_{\leq t})$ that re-weights the model's prior probability $p(a | x_{\leq t};\phi)$ by the estimated value $\hat{Q}^p(x_{\leq t}, a)$ derived from the roll-outs:
\begin{equation} \label{eq:ppi}
\pi(a | x_{\leq t}) = \frac{1}{Z}p(a | x_{\leq t};\phi) \exp(\beta \hat{Q}^p(x_{\leq t}, a))\,.
\end{equation}
We use $\beta=0.01$ for all experiments.

\subsubsection{Model-free agent}
\paragraph{Architecture} We implement an Advantage Actor-Critic (A2C) agent~\citep{mnih_asynchronous_2016} using a GRU-based sequence model with the same configuration as for the PPI agents. The GRU takes as input the history of observations of previous rounds and outputs the next action. The GRU is further augmented with a linear output head to estimate the value function $V(x)$. During training, we estimate the advantage $A(x_{\leq t}, a_t)$ using bootstrapped temporal-difference errors:
$$A(x_{\leq t}, a_t) = r_t + \gamma V(x_{\leq t+1}) - V(x_{\leq t})\,.$$
The model parameters are updated to minimize the combined policy gradient and value estimation loss:
$$\begin{aligned}
L = \sum_{t=1}^{T} \bigg( 
&- \log \pi(a_t | x_{\leq t}) A(x_{\leq t}, a_t) \\
&+ c_v \left( r_t + \gamma V(x_{\leq t+1}) - V(x_{\leq t}) \right)^2 \\
&+ c_e \sum_i \pi(a_t^i | x_{\leq t})\log \pi(a_t^i | x_{\leq t}) \bigg)\,,
\end{aligned}$$

where $c_v, c_e$ are hyperparameters representing, correspondingly, the value function and entropy training coefficients.

\paragraph{Training}

To get comparable results, we follow the A2C training protocol from~\citet{meulemans2024multi} including the value function estimation, Generalized Advantage Estimation~\citep{schulman2015high}, advantage normalization and reward scaling. See Appendix A of \citet{meulemans2024multi} for details. 

For each experiment, we perform a hyperparameter search over the learning rate, GAE lambda, advantage normalization, reward scaling and entropy regularization. We report the hyperparameters corresponding to the best-performing setting in Table~\ref{tab_app:hp_a2c}. 

\begin{table}[htpb!]
\caption{A2C hyperparameters \label{tab_app:hp_a2c}}
\centering
\begin{tabular}{@{}lllll@{}}
\toprule
\textbf{RL Hyperparameter} & \textbf{Step 1} & \textbf{Step 2} & \textbf{Step 3} & \textbf{Step 4} \\ \midrule
\texttt{advantages\_normalization}      & True                    & False                  & True                         & True                      \\
\texttt{batch size}      & 2048                    & 2048                  & 4096                         & 4096                      \\
\texttt{reward\_rescaling}              & $0.2$                    & $0.05$                  & $0.02$                         & $0.02$                      \\
\texttt{value\_discount ($\gamma$)}     & $0.99$                    & $0.99$                  & $0.99$                         & $0.99$                      \\
\texttt{td\_lambda ($\lambda_\text{td}$)}  & $0.99$                     & $1.0$                   & $0.95$                          & $1.0$                       \\
\texttt{gae\_lambda ($\lambda_\text{gae}$)} & $0.99$                     & $1.0$                   & $0.95$                          & $1.0$                       \\
\texttt{value\_coefficient}             & $0.5$                     & $0.5$                   & $0.5$                          & $0.5$                       \\
\texttt{entropy\_reg}                   & $0.001$                     & $0.001$                   & $0.001$                          & $0.01$                       \\
\texttt{optimizer}                      & Adam                      & Adam                    & Adam                           & Adam                        \\
\texttt{adam\_epsilon}                  & $0.00001$                 & $0.00001$               & $0.00001$                      & $0.00001$                  \\
\texttt{learning\_rate}                 & $0.005$                   & $0.005$                 & $0.0005$                        & $0.001$                     \\
\texttt{max\_grad\_norm}                & $1.0$                     & $1.0$                   & $1.0$                          & $1.0$                       \\
\bottomrule
\end{tabular}
\end{table}

\subsubsection{Tabular agents}
Tabular agents employ a memory-1 policy defined by five parameters: the cooperation probabilities conditional on the previous outcome ($cc, cd, dc, dd$) and the initial state ($s_0$). Each parameter is initialized from a uniform distribution $\mathcal{U}(0, 1)$.

\subsection{Ablations}\label{app:abl}
\subsubsection{Policy conditioning} For the ``Opponent ID'' ablation (\fig{fig:mixed_training}), we prepend the observation sequence $\mathbf{x}_{\leq t}$ with a conditioning vector $\mathbf{z}$ representing the opponent's identity. For tabular agents, $\mathbf{z}$ is defined as the flattened vector of log-probabilities across all possible observations $o \in \mathcal{O}$:
$$ \mathbf{z} = \big( \log \pi(a|o) \big)_{o\in \mathcal{O}, a \in \mathcal{A}} $$
where $\mathcal{O} = \{(C,C), (C,D), (D,C), (D,D), \text{Start}\}$. For A2C and PPI agents, $\mathbf{z} = \textbf{0}$.

\subsubsection{No mixed pool training} For the ``No Tabular Opponents'' ablation (\fig{fig:mixed_training}), we remove the tabular opponents from the mixed agent pool for both PPI and A2C experiments. For PPI, we additionally change the pretraining data distribution $D_0$ to not include tabular agents but instead consist of purely random action sequences with the corresponding rewards.

\section{Additional results}
\label{app:results}

\subsection{In-episode trajectories for mixed pool training}

Figure~\ref{fig:mixed_training_in_eps} shows the performance of PPI and A2C within a single episode during early training in the mixed pool setting (c.f. \sect{sec:mixed}), i.e., for $\text{phase}=8$ for PPI and $\text{training iteration}=70k$ for A2C, showing the emergence of in-context opponent inference and an initial gradient towards cooperation against other learning agents.

\begin{figure}[t]
    \centering
    \includegraphics[width=\textwidth]{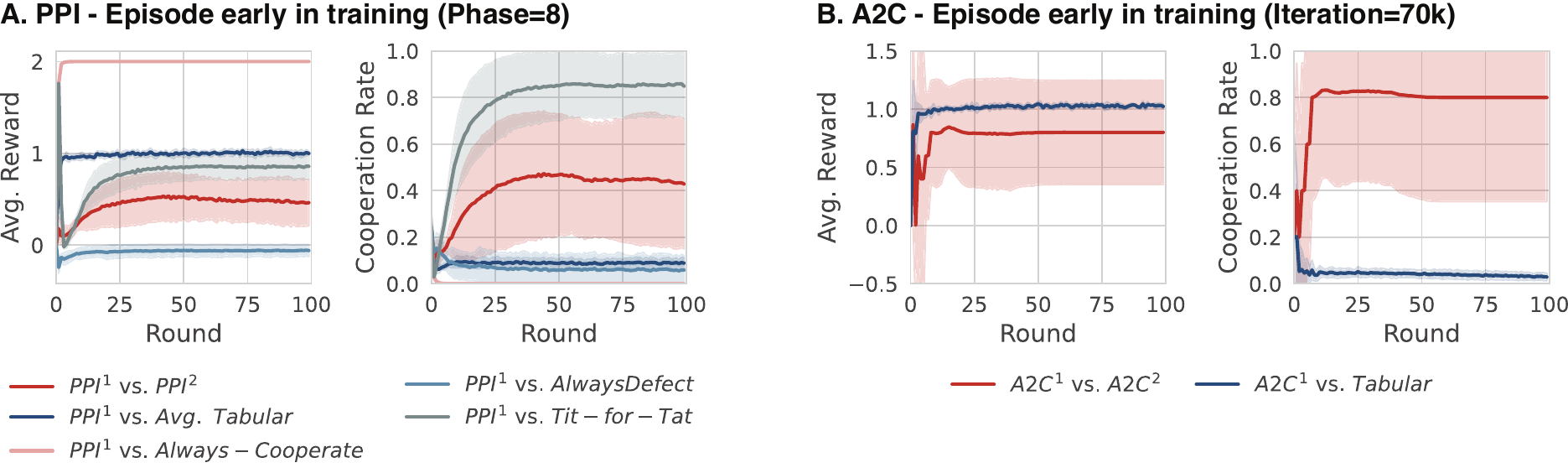}
    \caption{\textbf{Emergence of best-response in mixed training.} We plot within-episode performance of models trained in Figure~\ref{fig:mixed_training} before convergence. We observe that both A2C and PPI try to extort their counterpart at the beginning of the episode which subsequently leads to increased levels of cooperation. At the same time, identifying the opponent as a non-tit-for-tat-like tabular policy leads to high defection ratio. Error bars indicate standard deviation across 10 random seeds.}
    \label{fig:mixed_training_in_eps}
\end{figure}

\subsection{Additional results on A2C}
\label{app:a2c_results}

\begin{figure}[t]
    \centering
    \includegraphics[width=\textwidth]{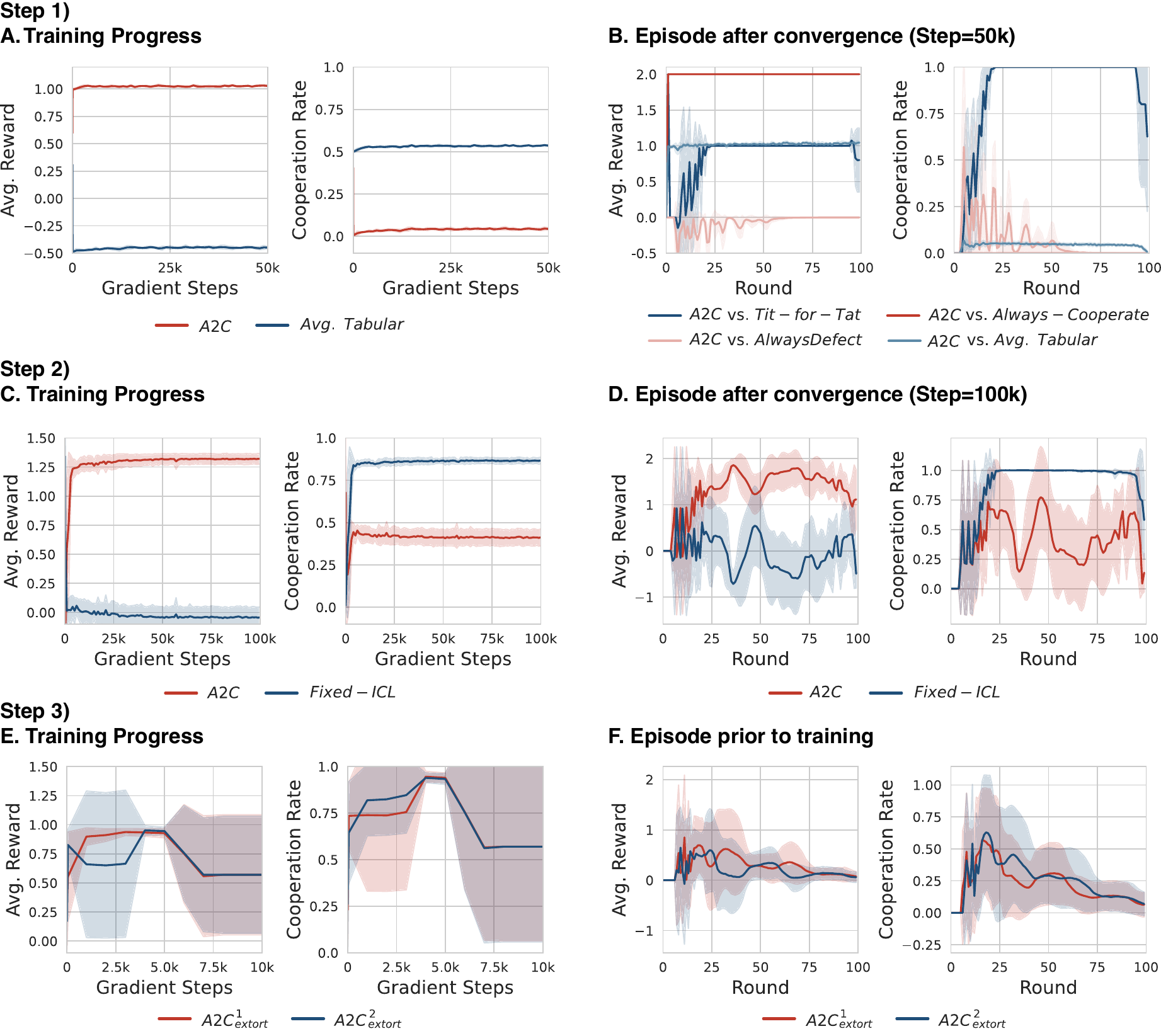}
    \caption{\textbf{A-B: Emergence of in-context best response} Performance of A2C trained against random tabular opponents and evaluated after convergence on a set of specific static policies. We denote the final agent as ``Fixed In-Context Learner''. \textbf{C-D: Learning to extort in-context learners.} Performance of a randomly initialized A2C agent against the Fixed In-Context Learner. \textbf{E-F: From mutual extortion to cooperation.} Two A2C extortion agents initially converge to cooperation when playing against each other, but with time they might collapse to mutual defection depending on the random seed. Error bars correspond to standard deviation over 5 random initializations.}
    \label{fig:a2c_best_response}
\end{figure}

Figure~\ref{fig:a2c_best_response} shows A2C-based results, corresponding to the PPI results presented in Figure \ref{fig:best_response} in the main text. In Step 1, we observe that an A2C agent learns to implement best response against a variety of tabular agents, same as for PPI. In Step 2, however, we observe that the newly trained A2C agent manages to get a higher reward playing against the Fixed-ICL baseline than the PPI agent (correspondingly, $\sim1.25$ vs. $\sim0.9$). This can either be caused by (i) the PPI Fixed-ICL policy being harder to exploit or (ii) A2C finding a better exploiter policy. The irregular shape of the exploitation dynamics in Figure \ref{fig:a2c_best_response}D suggests that the A2C exploiter agent learned a complex adversarial strategy against the A2C Fixed-ICL policy. In contrast, the PPI extortion policy of Figure \ref{fig:best_response}D seems to be a more regular extortion policy. Finally, in Step 3, the A2C agents initially move towards cooperation but due to training instability they might still turn back to defection depending on the seed.  

\FloatBarrier

\section{Derivation of Predictive Policy Improvement (PPI)}
\label{app:ppi_derivation}

In this section, we provide a formal derivation of the Predictive Policy Improvement (PPI) algorithm. PPI is inspired by the theoretically grounded MUPI framework \citep{meulemans2025embedded}, and is closely related to Maximum a Posteriori Policy Optimization \citep[MPO;][]{abdolmaleki2018maximum}. PPI departs from standard MPO by replacing the separate value function and self-model of MPO with a single sequence model trained in a self-supervised fashion to predict actions, observations and rewards. This model serves simultaneously as a world model and a policy prior, leveraging the generative capabilities of sequence models for value estimation and policy representation.

\subsection{Objective: The Variational Lower Bound}
We consider an agent optimizing its policy $\pi$ to maximize the expected return $V(\pi) = \mathbb{E}_{\tau \sim \mathbb{P}_{\pi}} \left[\sum_{t=0}^T \gamma^t r_t\right]$. To avoid notational clutter, we omit the agent-specific superscripts, as this derivation applies equally to the single-agent setting. We introduce a parameterized sequence model $p_{\phi}(a \mid x_{\leq t})$ which acts as a behavioral prior or self-model over the interaction history $x_{\leq t}$. We define a surrogate objective $J$ by penalizing the KL-divergence between the behavioral policy $\pi$ and the prior $p_{\phi}$:
\begin{equation}
    J(\pi, \phi) = \mathbb{E}_{\tau \sim \mathbb{P}_{\pi}} \left[ \sum_{t=0}^T \gamma^t r_t - \alpha \mathrm{KL}\left(\pi(\cdot \mid x_{\leq t}) \,||\, p_{\phi}(\cdot \mid x_{\leq t})\right) \right].
\end{equation}
Since $\mathrm{KL}(\cdot||\cdot) \geq 0$, $J(\pi, \phi)$ is a strict lower bound on $V(\pi)$, with equality at $\pi = p_{\phi}$. We optimize this bound via coordinate ascent on $\pi$ (the non-parametric policy) and $\phi$ (the parametric sequence model).

\subsection{Step 1: Non-parametric Policy Improvement w.r.t. $\pi$}
Optimizing $J(\pi, \phi)$ w.r.t. $\pi$ for a fixed $\phi$ is a full-fledged optimal control problem, which generally lacks an analytical solution and is therefore ill-suited for direct non-parametric policy improvement. Instead, we use a first-order approximation of $J(\pi, \phi_{k})$ around $\pi = p_{\phi_k}$, where $p_{\phi_k}$ is the self-model trained on the dataset gathered by deploying the previous policy $\pi_{k-1}$:
\begin{equation}\label{eqn:j_bar}
    \begin{split}
    \bar{J}(\pi, \phi_k) &= \sum_{t=1}^T\mathbb{E}_{x_{\leq t} \sim \mathbb{P}_{p_{\phi_k}}} \Big[ \mathbb{E}_{a \sim \pi(\cdot \mid x_{\leq t})} \left[ Q^{p_{\phi_k}}(x_{\leq t}, a) \right] - \alpha \mathrm{KL}\left(\pi(\cdot \mid x_{\leq t}) \,||\, p_{\phi_k}(\cdot \mid x_{\leq t})\right) \\
    &\quad - V^{p_{\phi_k}}(x_{\leq t}) \Big] + J(p_{\phi_k}, \phi_k).
    \end{split}
\end{equation}

Note that here, the Q-value is equal to the unregularized value $Q^{p_{\phi_k}}(x_{\leq t}, a) =  \mathbb{E}_{\tau_{>t} \sim \mathbb{P}_{p_{\phi_k}}(\cdot \mid x_{\leq t}, a)} \left[\sum_{t'=t}^T \gamma^{t'-t} r_{t'} \right]$, as all KL terms evaluate to zero under the prior. The crucial difference between $J$ and $\bar{J}$ is that the expectation over histories in $\bar{J}$ does not depend on the policy $\pi$ being optimized, which permits a closed-form solution for $\arg\max_{\pi} \bar{J}$.

We proceed to show that $\bar{J}$ is indeed a first-order approximation to $J$ around $p_{\phi_k}$ via the following two lemmas.

\begin{lemma}
    $\bar{J}(p_{\phi_k}, \phi_k) = J(p_{\phi_k}, \phi_k)$
\end{lemma}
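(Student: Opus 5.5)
We prove the identity by substituting $\pi = p_{\phi_k}$ into the definition of $\bar{J}$ in \eqref{eqn:j_bar} and checking that the summed bracketed term vanishes for every history $x_{\leq t}$. After that, only the constant $J(p_{\phi_k}, \phi_k)$ is left.

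First, the KL term $\mathrm{KL}\left(p_{\phi_k}(\cdot \mid x_{\leq t}) \,||\, p_{\phi_k}(\cdot \mid x_{\leq t})\right)$ is zero pointwise for every history, since a distribution has zero divergence from itself.

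Second, we show that $\mathbb{E}_{a \sim p_{\phi_k}(\cdot \mid x_{\leq t})}\left[ Q^{p_{\phi_k}}(x_{\leq t}, a) \right] = V^{p_{\phi_k}}(x_{\leq t})$. This is the standard Bellman consistency relation between $Q$ and $V$ under the same policy. Here we define $V^{p_{\phi_k}}(x_{\leq t})$ as the unregularized expected discounted return from $x_{\leq t}$ when acting with $p_{\phi_k}$, in the same way as $Q^{p_{\phi_k}}$ is defined after \eqref{eqn:j_bar}.
\begin{itemize}
\item We write $V^{p_{\phi_k}}(x_{\leq t}) = \mathbb{E}_{\tau_{\geq t} \sim \mathbb{P}_{p_{\phi_k}}(\cdot \mid x_{\leq t})}\left[\sum_{t'=t}^T \gamma^{t'-t} r_{t'}\right]$.
\item We factor the trajectory distribution as first sampling $a_t \sim p_{\phi_k}(\cdot \mid x_{\leq t})$ and then sampling $\tau_{>t} \sim \mathbb{P}_{p_{\phi_k}}(\cdot \mid x_{\leq t}, a_t)$.
\item The tower property of conditional expectation then gives exactly $\mathbb{E}_{a}[Q^{p_{\phi_k}}(x_{\leq t},a)]$.
\end{itemize}
The remark in the paper that all KL terms vanish under the prior justifies using the unregularized $V$ and $Q$ consistently.

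With both facts in place, each summand in \eqref{eqn:j_bar} reduces to $\mathbb{E}_{x_{\leq t}}[V^{p_{\phi_k}} - 0 - V^{p_{\phi_k}}] = 0$. The finite sum over $t$ of zeros is zero, so $\bar{J}(p_{\phi_k},\phi_k) = J(p_{\phi_k},\phi_k)$.

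The only step needing care is making the definition of $V^{p_{\phi_k}}$ precise and consistent with $Q^{p_{\phi_k}}$. In particular, the conditioning on $x_{\leq t}$ must include the observation and reward at time $t$, and the discount must start at exponent $0$ at time $t$ in both quantities. I expect this bookkeeping, rather than any real difficulty, to be the main obstacle. All sums are finite and the spaces are finite, so integrability is not an issue.
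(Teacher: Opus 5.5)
Your proof is correct and uses the same argument as the paper. At $\pi = p_{\phi_k}$ the KL term vanishes and $\mathbb{E}_{a\sim p_{\phi_k}(\cdot\mid x_{\leq t})}[Q^{p_{\phi_k}}(x_{\leq t},a)] = V^{p_{\phi_k}}(x_{\leq t})$, so every bracketed term in $\bar{J}$ is zero and only $J(p_{\phi_k},\phi_k)$ remains; the paper states this cancellation in one line, and you have spelled out the $Q$/$V$ consistency step via the tower property.
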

\begin{proof}
    It is easy to see that the terms of \eqref{eqn:j_bar} inside the expectation cancel out when $\pi = p_{\phi_k}$, leaving only $J(p_{\phi_k}, \phi_k)$.
\end{proof}

\begin{lemma}
    $\nabla_\pi \bar{J}(\pi, \phi_k) \vert_{\pi = p_{\phi_k}} = \nabla_\pi J(\pi, \phi_k) \vert_{\pi = p_{\phi_k}}$
\end{lemma}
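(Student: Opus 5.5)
The plan is to compute both gradients directly with a performance-difference style decomposition and check that they coincide at $\pi = p_{\phi_k}$. I treat $\pi$ non-parametrically, as a collection of distributions $\pi(\cdot \mid x_{\leq t})$, one per history. A directional derivative along a perturbation $\pi_\epsilon = p_{\phi_k} + \epsilon\,\delta$ then suffices, where each $\delta(\cdot\mid x_{\leq t})$ sums to zero over actions. The argument has three steps.

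\textbf{Step 1: the gradient of $\bar{J}$.} In $\bar{J}$ the history distribution $\mathbb{P}_{p_{\phi_k}}$, the Q-values $Q^{p_{\phi_k}}$, the values $V^{p_{\phi_k}}$ and $J(p_{\phi_k},\phi_k)$ do not depend on $\pi$. Differentiating is therefore immediate:
\begin{equation*}
\frac{d}{d\epsilon}\bar{J}(\pi_\epsilon,\phi_k)\Big|_{\epsilon=0} = \sum_{t}\mathbb{E}_{x_{\leq t}\sim \mathbb{P}_{p_{\phi_k}}}\Big[\sum_a \delta(a\mid x_{\leq t})\, Q^{p_{\phi_k}}(x_{\leq t},a)\Big].
\end{equation*}
The KL term contributes nothing at first order. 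Its derivative at $\pi = p_{\phi_k}$ is $\sum_a \delta(a\mid x_{\leq t})\,(\log(\pi/p)+1)$, which equals $\sum_a \delta = 0$ there, since $\pi = p$ makes $\log(\pi/p)$ vanish.

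\textbf{Step 2: the gradient of $J$.} Here I would write $J(\pi,\phi_k) = \mathbb{E}_{\mathbb{P}_\pi}\big[\sum_t \gamma^t \tilde r_t\big]$ with the regularized reward $\tilde r_t = r_t - \alpha\,\mathrm{KL}(\pi\|p_{\phi_k})(x_{\leq t})$, and differentiate by the policy-gradient theorem. This splits the derivative into two parts.
\begin{itemize}
\item[(a)] The derivative through the trajectory distribution gives $\sum_t \mathbb{E}_{\mathbb{P}_\pi}\big[\sum_a \delta(a\mid x_{\leq t})\,\tilde Q^{\pi}(x_{\leq t},a)\big]$, where $\tilde Q^\pi$ is the KL-regularized Q-function.
\item[(b)] The explicit derivative of the KL penalties is zero at $\pi = p_{\phi_k}$, by the same argument as in Step 1.
\end{itemize}
Evaluated at $\pi = p_{\phi_k}$, every future KL term vanishes. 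Hence $\tilde Q^{p_{\phi_k}} = Q^{p_{\phi_k}}$, as the paper already notes, and $\mathbb{P}_\pi = \mathbb{P}_{p_{\phi_k}}$. So the derivative of $J$ equals the expression from Step 1.

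\textbf{Step 3: bookkeeping.} Alternatively, one can invoke the performance difference lemma, $J(\pi)-J(p) = \sum_t \mathbb{E}_{\mathbb{P}_\pi}\big[\mathbb{E}_{a\sim\pi}\tilde Q^{p}(x_{\leq t},a) - \alpha\,\mathrm{KL} - \tilde V^{p}(x_{\leq t})\big]$ (with the appropriate discount weighting). This differs from $\bar{J}(\pi)-J(p)$ only in that the outer expectation is under $\mathbb{P}_\pi$ instead of $\mathbb{P}_{p}$. Because the bracket is $O(\epsilon)$, changing the outer measure alters the difference only at $O(\epsilon^2)$, which again gives equal first derivatives.

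The main obstacle is this last consistency point. The factors $\gamma^t$ and the normalization of $Q$ and $V$ in the definition of $\bar{J}$, where $\gamma$ appears inside $Q$ but not as an outer weight, must match the per-timestep weights that arise in the policy-gradient expression for $J$. I would check that the derivative of $J$ carries the weights $\gamma^{t}$ at history $x_{\leq t}$. If the paper's $\bar{J}$ omits them, I would either assume $\gamma = 1$ (finite horizon $T$) or read the outer sum as implicitly discounted. The identity holds in either case provided both sides use the same weighting.
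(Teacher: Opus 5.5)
Your proposal is correct and follows essentially the same route as the paper. Both arguments compute the first-order variation of $J$ and of $\bar{J}$ at $\pi = p_{\phi_k}$, and both observe that the two agree there because $\mathbb{P}_{\pi} = \mathbb{P}_{p_{\phi_k}}$ and all future KL penalties vanish, so that $Q^{\pi}_{\mathrm{reg}} = Q^{p_{\phi_k}}$. Your restriction to zero-sum perturbations only removes the common $-\alpha$ term that the paper carries on both sides, and your Step~3 is a cleaner version of the paper's appeal to the performance difference lemma.

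Your concern about the discount weights is justified. The paper's derivative of $J$ silently drops the factor $\gamma^t$ at histories of length $t$, so as written the identity needs $\gamma = 1$ or a $\gamma^t$-weighted outer sum in $\bar{J}$, which is exactly the fix you propose.
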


\begin{proof}
We analyze the functional derivatives of both objectives with respect to the policy distribution $\pi(a \mid x_{\leq t})$ evaluated at a specific history $x_{\leq t}$ and action $a$. 

First, consider the surrogate objective $\bar{J}(\pi, \phi_k)$. Because the expectation over histories is fixed to the prior distribution $\mathbb{P}_{p_{\phi_k}}$ and thus does not depend on the optimization variable $\pi$, the functional derivative is straightforward. Applying the product rule to the logarithmic term, the functional derivative with respect to the local action probability $\pi(a \mid x_{\leq t})$ is:
\begin{equation}
    \frac{\delta \bar{J}(\pi, \phi_k)}{\delta \pi(a \mid x_{\leq t})} = \mathbb{P}_{p_{\phi_k}}(x_{\leq t}) \left( Q^{p_{\phi_k}}(x_{\leq t}, a) - \alpha \log \frac{\pi(a \mid x_{\leq t})}{p_{\phi_k}(a \mid x_{\leq t})} - \alpha \right).
\end{equation}
Evaluating this derivative at the prior $\pi = p_{\phi_k}$, the logarithmic term vanishes (since $\log 1 = 0$), yielding:
\begin{equation}
    \left. \frac{\delta \bar{J}(\pi, \phi_k)}{\delta \pi(a \mid x_{\leq t})} \right|_{\pi = p_{\phi_k}} = \mathbb{P}_{p_{\phi_k}}(x_{\leq t}) \left( Q^{p_{\phi_k}}(x_{\leq t}, a) - \alpha \right).
\end{equation}

Next, differentiating the true objective $J(\pi, \phi_k)$ is more involved because $\pi$ dictates the history visitation distribution $\mathbb{P}_{\pi}(x_{\leq t})$. We define the regularized Q-function, $Q^\pi_{\mathrm{reg}}(x_{\leq t}, a)$, which captures the expected return including all \emph{future} KL penalties, but excluding the immediate penalty at time $t$:
\begin{equation}
    Q^\pi_{\mathrm{reg}}(x_{\leq t}, a) = \mathbb{E}_{\tau_{>t} \sim \mathbb{P}_\pi(\cdot \mid x_{\leq t}, a)} \left[ \sum_{k=t}^T \gamma^{k-t} R_k - \alpha \sum_{k=t+1}^T \gamma^{k-t} \mathrm{KL}\left(\pi(\cdot \mid x_{\leq k}) \,||\, p_{\phi_k}(\cdot \mid x_{\leq k})\right) \right].
\end{equation}
Using this, the value of a specific history is:
\begin{equation}
    V^\pi_{\textrm{reg}}(x_{\leq t}) = \sum_{a'} \pi(a' \mid x_{\leq t}) \left( Q^\pi_{\mathrm{reg}}(x_{\leq t}, a') - \alpha \log \frac{\pi(a' \mid x_{\leq t})}{p_{\phi_k}(a' \mid x_{\leq t})} \right).
\end{equation}

To find the functional derivative of the global objective $J$ with respect to the local policy $\pi(a \mid x_{\leq t})$, we apply the continuous extension of the Performance Difference Lemma \citep{kakade2002approximately}. This theorem establishes that the indirect effect of the policy on the visitation distribution $\mathbb{P}_{\pi}(x_{\leq t})$ yields a net zero contribution to the gradient. Consequently, the derivative isolates the state visitation probability multiplied by the local derivative of the value function:
\begin{equation}
    \frac{\delta J(\pi, \phi_k)}{\delta \pi(a \mid x_{\leq t})} = \mathbb{P}_{\pi}(x_{\leq t}) \frac{\partial V^\pi(x_{\leq t})}{\partial \pi(a \mid x_{\leq t})}.
\end{equation}
Taking the partial derivative of $V^\pi(x_{\leq t})$ yields:
\begin{equation}
    \frac{\delta J(\pi, \phi_k)}{\delta \pi(a \mid x_{\leq t})} = \mathbb{P}_{\pi}(x_{\leq t}) \left( Q^\pi_{\mathrm{reg}}(x_{\leq t}, a) - \alpha \log \frac{\pi(a \mid x_{\leq t})}{p_{\phi_k}(a \mid x_{\leq t})} - \alpha \right).
\end{equation}

Finally, we evaluate this true derivative at the prior policy $\pi = p_{\phi_k}$. Three simplifications occur:
\begin{itemize}
    \item The history visitation distribution matches the prior: $\mathbb{P}_{\pi}(x_{\leq t}) = \mathbb{P}_{p_{\phi_k}}(x_{\leq t})$.
    \item The immediate KL penalty evaluates to zero: $\log 1 = 0$.
    \item Because the policy perfectly matches the prior at all future timesteps, all future KL penalties evaluate to zero. Consequently, the regularized Q-function smoothly collapses to the unregularized Q-function of the prior: $Q^\pi_{\mathrm{reg}}(x_{\leq t}, a) = Q^{p_{\phi_k}}(x_{\leq t}, a)$.
\end{itemize}
Applying these simplifications yields:
\begin{equation}
    \left. \frac{\delta J(\pi, \phi_k)}{\delta \pi(a \mid x_{\leq t})} \right|_{\pi = p_{\phi_k}} = \mathbb{P}_{p_{\phi_k}}(x_{\leq t}) \left( Q^{p_{\phi_k}}(x_{\leq t}, a) - \alpha \right).
\end{equation}

Since the functional derivatives of both $J$ and $\bar{J}$ evaluated at $\pi = p_{\phi_k}$ perfectly coincide, it follows that $\nabla_\pi \bar{J}(\pi, \phi_k) \vert_{\pi = p_{\phi_k}} = \nabla_\pi J(\pi, \phi_k) \vert_{\pi = p_{\phi_k}}$, concluding the proof.
\end{proof}

\paragraph{Optimizing $\bar{J}$.}
Optimizing $\bar{J}(\pi, \phi_k)$ w.r.t. $\pi$ for fixed $\phi_k$ has the well-known Boltzmann policy as solution:
\begin{equation}
        \pi^*(a \mid x_{\leq t}) = \frac{p_{\phi_k}(a \mid x_{\leq t})}{Z(x_{\leq t})} \exp\left(\beta Q^{p_{\phi_k}}(x_{\leq t}, a)\right),
    \end{equation}
with the inverse temperature $\beta=\frac{1}{\alpha}$. We treat $\beta$ as a fixed hyperparameter defining a trust region around $p_{\phi_k}$ where $\bar{J}$ is a sufficiently accurate approximation of $J$.

\subsection{Comparison with MPO and Sequence-Model Value Estimation}
While PPI shares the coordinate-ascent structure of MPO, it differs in how Q-values are obtained and whether $\pi$ or $p_{\phi_k}$ is deployed as behavioral policy to gather trajectories. In standard MPO, $Q(s, a)$ is typically represented by a separate neural network (a critic) trained via temporal difference (TD) learning on the agent's own experience, relying on the Markov property to condition on a single state $s$ instead of the full history.

In contrast, PPI leverages the sequence model as a world model. The value $\hat{Q}_{p}(x_{\leq t}, a)$ is estimated via Monte Carlo rollouts performed within the sequence model itself. By sampling future trajectories $\tau_{>t}$ from $p_\phi(\cdot \mid x_{\leq t}, a)$, the agent evaluates the expected return of an action based on its internal representation of both the environment dynamics and the co-player's predicted responses. This allows PPI to benefit from the high-capacity temporal dependencies captured by the sequence model. Note that PPI is easily extendable toward learning an explicit Q-value function conditioned on full histories to amortize the cost of the MC rollouts and reduce variance.

\section{Theoretical Analysis of the Equilibrium Behavior of PPI Agents}
\label{app:predictive-equilibrea}

In this section, we analyze the theoretical properties of the Predictive Policy Improvement (PPI) algorithm. Unlike standard reinforcement learning, where agents optimize a policy against a fixed (or stationarily adapting) environment, PPI agents operate in a \textit{performative} loop: the agent's predictive model determines its policy, which determines the data distribution, which in turn is used to update the predictive model. This is closely related to the concept of ``performative prediction'' \citep{perdomo2020performative}, where the predictions of a model can affect the distribution of the very data it is trying to predict (with traffic prediction models being a prominent example).

We formalize this interaction and define the concept of a \textit{Predictive Equilibrium} (PE). We show that while a global pure-strategy equilibrium is not guaranteed to exist due to the non-convex nature of deep neural networks, a \textit{local} predictive equilibrium (consistent with gradient-based optimization) and a \textit{mixed} predictive equilibrium (randomized strategies) are guaranteed to exist under standard assumptions. Finally, we show that in the limit of a perfect world model, a predictive equilibrium corresponds to a subjective embedded equilibrium \citep{meulemans2025embedded}.

\subsection{Formal Setup}

Consider a game with $n$ agents. Each agent $i$ maintains a predictive sequence model $p_{\theta_i}(h^i)$ parameterized by $\theta_i \in \Theta_i$, where $h^i$ is a history $x^i_{\leq t}$ of arbitrary length $t$, and $\Theta_i$ is a compact metric space (e.g., a bounded subset of $\mathbb{R}^d$).

\paragraph{The Performative Loop.}
The PPI algorithm (Algorithm \ref{alg:seq_model_rl}) induces a closed-loop dependency between parameters and data:
\begin{enumerate}
    \item \textbf{Model induces Policy:} The agent derives a policy $\pi_{\theta_i}$ from its model $p_{\theta_i}$ via the policy improvement operator, defined in Eq.~\ref{eq:ppi} (the Boltzmann policy over Q-values estimated via rollout).
    \item \textbf{Policy induces Data:} When all agents interact using policies $\boldsymbol{\pi}_{\boldsymbol{\theta}} = \{\pi_{\theta_1}, \dots, \pi_{\theta_N}\}$, they induce a joint distribution over interaction histories $h$. We denote the true probability distribution of histories generated by the current joint configuration $\boldsymbol{\theta}$ as $\mathbb{P}(\cdot ; \boldsymbol{\theta})$.
    \item \textbf{Data induces Model:} The agent updates $\theta_i$ to minimize the Kullback-Leibler (KL) divergence between the observed distribution $\mathbb{P}(\cdot ; \boldsymbol{\theta})$ and its model $p_{\theta_i}$.
\end{enumerate}

\subsection{Predictive Equilibria}

A stable point of this training loop is a configuration where the model optimally predicts the data generated by the policy derived from that very model.

\begin{definition}[Global Predictive Equilibrium] \label{def:predictive_equilibrium}
A joint configuration $\boldsymbol{\theta}^* = (\theta^*_1, \dots, \theta^*_n)$ is a Global Predictive Equilibrium if, for all agents $i$:
\begin{align}
    \theta^*_i \in \argmin_{\theta_i \in \Theta_i} \text{KL}\left( \mathbb{P}(h_i ; \boldsymbol{\theta}^*) \,||\, p_{\theta_i}(h_i) \right)\,.
\end{align}
\end{definition}
Intuitively, at equilibrium, no agent can improve their world model given the behavior induced by the current joint models.

\paragraph{Challenges.} Proving the existence of a global PE is difficult because the map $\theta \mapsto \pi_\theta$ is complex and the resulting objective is generally non-convex. The ``argmin'' set may change discontinuously (mode hopping), preventing the application of standard fixed-point theorems. To address this, we define two relaxed solution concepts: \textit{Local} PE (relevant to gradient descent) and \textit{Mixed} PE.

\subsubsection{Local Predictive Equilibrium}
In practice, PPI agents update parameters via gradient descent. They do not find global minima but rather stationary points. Crucially, the update assumes that the data distribution is fixed (which can be interpreted as a ``stop-gradient'' on the environment dynamics).

\begin{definition}[Local Predictive Equilibrium]
Let $\Theta_i \subset \mathbb{R}^{d_i}$ be a compact, convex parameter space for each agent $i \in \mathcal{I}$. A joint configuration $\thetavec^* = (\theta^*_1, \dots, \theta^*_n) \in \prod_{i \in \mathcal{I}} \Theta_i$ is a Local Predictive Equilibrium if, for all agents $i \in \mathcal{I}$, the configuration satisfies the first-order stationarity condition with respect to their local loss, assuming the data generating process is fixed. Formally:
\begin{align}
\label{eq:local_predictive_equilibrium}
    \left\langle \nabla_{\theta_i} \kl{\mathbb{P}(h_i ; \thetavec^*)}{p_{\theta_i}(h_i)} \Big|_{\theta_i = \theta^*_i} , \phi_i - \theta^*_i \right\rangle \ge 0\,, \quad \forall \phi_i \in \Theta_i\,,\forall i\in\mathcal{I}\,,
\end{align}
where $\langle \cdot , \cdot \rangle$ denotes the standard inner product.
\end{definition}
This variational inequality definition corresponds precisely to the convergence criteria of projected gradient descent in the PPI algorithm. If $\theta^*_i$ lies in the interior of $\Theta_i$, Eq. \ref{eq:local_predictive_equilibrium} implies the standard condition
\begin{align}
\label{eq:local_predictive_equilibrium_interior_point}
    \nabla_{\theta_i} \kl{\mathbb{P}(h_i ; \thetavec^*)}{p_{\theta_i}(h_i)} \Big|_{\theta_i = \theta^*_i} = 0\,, \quad \forall i\in\mathcal{I}\,.
\end{align}

\begin{theorem}[Existence of Local Predictive Equilibrium]\label{theorem:existence_local_pe}
Assume $\Theta_i$ is a compact, convex subset of $\mathbb{R}^{d_i}$. Assume the mapping from parameters $\thetavec$ to the local gradient of the loss $G_i(\thetavec) = \nabla_{\vartheta} \kl{\mathbb{P}(\cdot; \thetavec)}{p_{\vartheta}}\big|_{\vartheta=\theta_i}$ is continuous. Then, there exists at least one Local Predictive Equilibrium.
\end{theorem}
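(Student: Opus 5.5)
The plan is to recast the Local Predictive Equilibrium condition as a finite-dimensional variational inequality and apply a standard fixed-point theorem. Set $\Theta = \prod_{i\in\mathcal{I}} \Theta_i \subset \mathbb{R}^{d}$ with $d = \sum_i d_i$. As a finite product of compact convex sets, $\Theta$ is compact and convex. Define the joint map $G(\thetavec) = (G_1(\thetavec), \dots, G_n(\thetavec))$. By assumption each $G_i$ is continuous, so $G:\Theta \to \mathbb{R}^d$ is continuous. It then suffices to find $\thetavec^*\in\Theta$ with $\langle G(\thetavec^*), \bm{\phi} - \thetavec^*\rangle \ge 0$ for all $\bm{\phi}\in\Theta$. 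Taking $\bm{\phi}$ to differ from $\thetavec^*$ only in block $i$ gives exactly Eq.~\ref{eq:local_predictive_equilibrium} for agent $i$. Note that the data distribution $\mathbb{P}(\cdot;\thetavec)$ enters only through $G$, which matches the ``stop-gradient'' reading in the definition: the $\vartheta$-derivative is taken with the data-generating $\thetavec$ held fixed.

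The key step is a projected-gradient fixed-point construction. Let $\Pi_\Theta$ be the Euclidean projection onto $\Theta$. It is well defined and single-valued because $\Theta$ is nonempty, closed and convex, and it is $1$-Lipschitz. Define $F(\thetavec) = \Pi_\Theta\big(\thetavec - G(\thetavec)\big)$, i.e.\ one step of projected gradient descent with unit step size. Then $F$ is a continuous self-map of the compact convex set $\Theta$, so Brouwer's fixed-point theorem gives some $\thetavec^*$ with $F(\thetavec^*) = \thetavec^*$. Moreover, $F$ is blockwise, $F_i(\thetavec) = \Pi_{\Theta_i}(\theta_i - G_i(\thetavec))$, since projection onto a product set factorizes. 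So the argument can equally be run agent by agent.

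Next, translate the fixed point into the variational inequality using the obtuse-angle characterization of projection. For closed convex $C$, $z = \Pi_C(y)$ if and only if $\langle y - z, \phi - z\rangle \le 0$ for all $\phi\in C$. Apply this with $C = \Theta_i$, $y = \theta_i^* - G_i(\thetavec^*)$ and $z = \theta_i^*$. This gives $\langle -G_i(\thetavec^*), \phi_i - \theta_i^*\rangle \le 0$, which is Eq.~\ref{eq:local_predictive_equilibrium}. If $\theta_i^*$ is interior, choosing $\phi_i = \theta_i^* \pm \epsilon e$ in every direction $e$ forces $G_i(\thetavec^*)=0$, recovering Eq.~\ref{eq:local_predictive_equilibrium_interior_point}.

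None of these steps is genuinely hard once continuity of $G$ is granted; the substance of the theorem lies in that hypothesis. The one point needing care is checking that $F$ really maps into $\Theta$ and is continuous. This follows because $\Pi_\Theta$ is nonexpansive and the composition of continuous maps is continuous. Another route would be Hartman--Stampacchia, or Kakutani applied to the map $\thetavec \mapsto \argmin_{\bm{\phi}\in\Theta}\langle G(\thetavec), \bm{\phi}\rangle$. The projection/Brouwer route above avoids handling set-valued maps and directly mirrors the projected-gradient dynamics of PPI.
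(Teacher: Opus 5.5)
Your proposal is correct and follows essentially the same route as the paper's proof. The paper also takes the projected gradient step $T(\thetavec)=\mathrm{Proj}_{\Theta}(\thetavec-\eta G(\thetavec))$ on the product set, applies Brouwer's theorem, and turns the fixed point into the variational inequality via the obtuse-angle characterization of Euclidean projection. The only differences are cosmetic: you fix $\eta=1$, and you spell out two points the paper leaves implicit, namely the blockwise factorization and the reduction from the joint inequality to the per-agent one.
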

\begin{proof}
We analyze the existence of a Local Predictive Equilibrium by framing it as a fixed-point problem.
Let $\mathcal{L}_i(\thetavec, \psi) = \kl{\mathbb{P}(h_i ; \thetavec)}{p_{\psi}(h_i)}$ denote the loss function for agent $i$, where the first argument $\thetavec$ determines the data distribution (fixed locally) and the second argument $\psi$ is the parameter being optimized.
Define the \textit{local gradient field} $G: \Theta \to \mathbb{R}^D$ (where $D = \sum_{i \in \mathcal{I}} d_i$) as the concatenation of the individual gradients:
$$ G(\thetavec) = \left( \nabla_{\psi} \mathcal{L}_1(\thetavec, \psi)\big|_{\psi=\theta_1}, \dots, \nabla_{\psi} \mathcal{L}_n(\thetavec, \psi)\big|_{\psi=\theta_n} \right) \,.$$
A Local Predictive Equilibrium is characterized by the variational inequality $\langle G(\thetavec^*), \phi - \thetavec^* \rangle \ge 0$ for all $\phi \in \Theta$, where $\Theta = \prod_{i \in \mathcal{I}} \Theta_i$.

We assume the parameter space $\Theta$ is a compact, convex subset of Euclidean space, and that the gradient field $G(\thetavec)$ is continuous. The continuity of $G$ follows naturally from the smoothness assumptions on the predictive models $p_\theta$ and the induced policy distributions.

Consider the map $T: \Theta \to \Theta$ defined by a projected gradient step:
$$ T(\thetavec) = \text{Proj}_{\Theta} \left( \thetavec - \eta G(\thetavec) \right)\,, $$
where $\eta > 0$ is a scalar step size and $\text{Proj}_{\Theta}$ is the Euclidean projection onto the set $\Theta$.
\begin{enumerate}
    \item \textbf{Compactness and Convexity:} By assumption, $\Theta$ is a compact and convex set.
    \item \textbf{Continuity:} The map $G$ is continuous by assumption. The projection operator $\text{Proj}_{\Theta}$ is non-expansive and thus continuous. Therefore, the composition $T$ is a continuous map from $\Theta$ to itself.
\end{enumerate}

By Brouwer's Fixed Point Theorem, there exists a point $\thetavec^* \in \Theta$ such that $T(\thetavec^*) = \thetavec^*$.
This fixed point condition implies:
$$ \thetavec^* = \text{Proj}_{\Theta} \left( \thetavec^* - \eta G(\thetavec^*) \right) \,.$$
By the standard property of the Euclidean projection onto a closed convex set, this equation holds if and only if:
$$ \langle \left( \thetavec^* - \eta G(\thetavec^*) \right) - \thetavec^*, \phi - \thetavec^* \rangle \le 0\,, \quad \forall \phi \in \Theta\,. $$
Simplifying the terms inside the inner product, we obtain:
$$ \langle - \eta G(\thetavec^*), \phi - \thetavec^* \rangle \le 0 \implies \langle G(\thetavec^*), \phi - \thetavec^* \rangle \ge 0\,, \quad \forall \phi \in \Theta\,. $$
This inequality is precisely the first-order stationarity condition defined in Eq. \ref{eq:local_predictive_equilibrium}, generalized to the joint parameter space $\Theta$. Therefore, the fixed point $\thetavec^*$ constitutes a Local Predictive Equilibrium, rigorously accommodating both interior stationary points and boundary solutions.
\end{proof}

\subsubsection{Mixed Predictive Equilibrium}
To guarantee the existence of an equilibrium without relying on local approximations, we can allow agents to randomize over model parameters. This is analogous to mixed strategies in game theory.

\begin{definition}[Mixed Predictive Equilibrium]
Let $\Delta_{\Theta_i}$ be the set of probability distributions over parameters $\Theta_i$. A Mixed Predictive Equilibrium is a tuple of distributions $\bm{\mu}^* = (\mu^*_1, \dots, \mu^*_n)$ such that for all $i \in \mathcal{I}$:
\begin{align}
    \mu^*_i \in \argmin_{\mu_i \in \Delta_{\Theta_i}} {\kl{\mathbb{P}(h_i ; \bm{\mu}^*)}{p_{\mu_i}(h_i)}}\,,
\end{align}
where $p_{\mu_i}(h_i) = \expect{\theta_i \sim \mu_i}{ p_{\theta_i}(h_i)}$, and $\mathbb{P}(h_i ; \bm{\mu}^*)$ is the distribution of histories generated when each agent $i$ follows the policy $\pi_{\mu_i}$ obtained by applying the policy improvement operator \eqref{eq:ppi} on $p_{\mu_i}$.
\end{definition}

\begin{theorem}[Existence of Mixed Predictive Equilibrium]
\label{thm:mixed-predictive-equilibria-existence}
Assume $\Theta_i$ is a compact metric space\footnote{It is worth noting that we do not require the convexity of $\Theta_i$ in \cref{thm:mixed-predictive-equilibria-existence}, we only need compactness.} and the map $(\thetavec,\theta_i') \mapsto \kl{\mathbb{P}(\cdot;\thetavec)}{p_{\theta_i'}}$ is continuous for every $i\in\mathcal{I}$. Furthermore, assume that $\kl{\mathbb{P}(h_i ; \bm{\mu})}{p_{\mu_i}(h_i)} < \infty$ for all $\bm{\mu} \in \Delta = \prod_{i \in \mathcal{I}} \Delta_{\Theta_i}$. Then a Mixed Predictive Equilibrium exists.
\end{theorem}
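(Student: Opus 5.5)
We will prove existence through a fixed-point argument on the product of mixed-strategy spaces, using the Kakutani--Fan--Glicksberg theorem in place of Brouwer (the tool used for Theorem~\ref{theorem:existence_local_pe}). Since each $\Theta_i$ is a compact metric space, $\Delta_{\Theta_i}$ with the weak-$*$ topology is a nonempty, convex, compact, metrizable subset of the locally convex space of finite signed measures (Prokhorov/Riesz). Hence $\Delta=\prod_i\Delta_{\Theta_i}$ is nonempty, convex and compact as well. We define the best-response correspondence $B(\bm{\mu})=\prod_i B_i(\bm{\mu})$ with
$$B_i(\bm{\mu})=\argmin_{\nu_i\in\Delta_{\Theta_i}} F_i(\bm{\mu},\nu_i),\qquad F_i(\bm{\mu},\nu_i)=\kl{\mathbb{P}(h_i;\bm{\mu})}{p_{\nu_i}(h_i)}.$$
A fixed point $\bm{\mu}^*\in B(\bm{\mu}^*)$ is, by definition, a Mixed Predictive Equilibrium. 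So it suffices to check the hypotheses of Kakutani--Fan--Glicksberg: nonempty, convex values, and closed graph.

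\textbf{Convexity of values.} Write $F_i(\bm{\mu},\nu_i)=\sum_{h}\mathbb{P}(h;\bm{\mu})\log\mathbb{P}(h;\bm{\mu})-\sum_h \mathbb{P}(h;\bm{\mu})\log p_{\nu_i}(h)$, working per history length, where histories are finite since the alphabets are finite. The map $\nu_i\mapsto p_{\nu_i}(h)=\int p_{\theta}(h)\,d\nu_i(\theta)$ is linear, and $-\log$ is convex. Therefore, for fixed $\bm{\mu}$, $F_i(\bm{\mu},\cdot)$ is convex on $\Delta_{\Theta_i}$. The set of minimizers of a convex function over a convex set is convex, and the finiteness assumption rules out the degenerate $+\infty$ case.

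\textbf{Continuity (the main obstacle).} We need $F_i$ to be jointly continuous on $\Delta\times\Delta_{\Theta_i}$. For nonemptiness, Weierstrass then gives existence of a minimizer on the compact $\Delta_{\Theta_i}$. For closed graph, Berge's maximum theorem, with a constant compact feasible set, gives that $B_i$ is upper hemicontinuous with compact values.

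The delicate part is that $\mathbb{P}(\cdot;\bm{\mu})$ depends on $\bm{\mu}$ through the mixed models $p_{\mu_j}$, through the Boltzmann operator \eqref{eq:ppi}, and through the rollout Q-values. The plan is as follows.

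First, $\theta\mapsto p_\theta(h)$ is continuous and bounded in $[0,1]$. This is implied by the continuity hypothesis on the KL map, or can be taken as part of it. Consequently $\mu_j\mapsto p_{\mu_j}(h)$ is weak-$*$ continuous, for each of the finitely many $h$ of each length.

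Second, conditionals $p_{\mu_j}(a\mid x)$ are ratios of such quantities, and the rollout values $\hat{Q}$ are finite sums and products of them. Hence the Boltzmann policy $\pi_{\mu_j}$ is continuous wherever denominators are positive. We will argue positivity from the finiteness assumption, or else handle zero-probability histories by noting they carry zero weight in $\mathbb{P}$.

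Third, $\mathbb{P}(h;\bm{\mu})$ is then a finite product of continuous policy and environment terms, and so is continuous.

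Finally, $F_i$ is a finite sum (per length, with the horizon $T$ finite) of terms $q\log q-q\log p$ that are continuous where $p>0$. Uniform finiteness of the KL over the compact $\Delta$, together with lower semicontinuity of KL, upgrades this to continuity. If full continuity proves hard at boundary points, we will fall back on lower semicontinuity of $F_i(\bm{\mu},\cdot)$ for existence of minimizers, plus joint continuity restricted to where the KL is finite, which holds everywhere by assumption.

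With these properties established, Kakutani--Fan--Glicksberg yields $\bm{\mu}^*\in B(\bm{\mu}^*)$, completing the proof. Convexity of $\Theta_i$ is never used, since convexity comes from $\Delta_{\Theta_i}$ itself, consistent with the footnote.
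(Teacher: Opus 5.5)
Your route is genuinely different from the paper's. The paper never forms the best-response correspondence. Instead it builds a single-valued, Nash-1951-type improvement map. It takes a countable dense set $\{\tilde\mu_{i,k}\}\subset\Delta_{\Theta_i}$ and $a_i(\bm{\mu},\mu_i')=\max\{0,F_i(\bm{\mu},\mu_i)-F_i(\bm{\mu},\mu_i')\}$, sets $T_i(\bm{\mu})$ proportional to $\mu_i+\sum_k 2^{-k}a_i(\bm{\mu},\tilde\mu_{i,k})\tilde\mu_{i,k}$, and applies Schauder. It then argues by contradiction. If some competitor is strictly better at the fixed point, density and continuity of $a_i(\bm{\mu}^*,\cdot)$ force positive total mass. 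So $\mu_i^*$ is a countable convex combination of strictly better $\tilde\mu_{i,k}$, and Jensen gives $F_i(\bm{\mu}^*,\mu_i^*)<F_i(\bm{\mu}^*,\mu_i^*)$.

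Both proofs rest on the fact you isolate: $\nu_i\mapsto p_{\nu_i}$ is affine, so $F_i(\bm{\mu},\cdot)$ is convex, which plays the role of linearity of payoffs in Nash's theorem. You use it only to make best-response sets convex, and Kakutani--Fan--Glicksberg plus Berge replaces the hand-built map, the dense subset and the contradiction. Your version is shorter and needs only lower semicontinuity in $\nu_i$ for nonemptiness. The paper's avoids set-valued fixed-point theory but needs continuity of $F_i(\bm{\mu}^*,\cdot)$, not just lower semicontinuity, for its density step.

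Both need continuity of $\bm{\mu}\mapsto\mathbb{P}(\cdot;\bm{\mu})$ through the nonlinear Boltzmann operator. The paper simply asserts this from the pure-parameter hypothesis, even though $\mathbb{P}(\cdot;\bm{\mu})$ is not the $\bm{\mu}$-average of the $\mathbb{P}(\cdot;\bm{\theta})$. Your sketch of it is sound: diagonal finiteness forces $p_{\mu_j}>0$ on histories charged by $\mathbb{P}$, and elsewhere a bounded $Q$ times a vanishing prior weight vanishes. Continuity of $\theta\mapsto p_\theta(h)$ is not implied by the KL hypothesis, though. It is an extra assumption, which the paper also uses tacitly.

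The step that fails as written is the upgrade to joint continuity of $F_i$. The hypothesis gives only $F_i(\bm{\mu},\mu_i)<\infty$ on the diagonal, pointwise. A competitor can have $F_i(\bm{\mu},\nu_i)=+\infty$, so \emph{holds everywhere by assumption} is false, and lower semicontinuity plus finiteness does not give upper semicontinuity. This is exactly what the closed graph needs. To pass $F_i(\bm{\mu}^n,\nu^n)\le F_i(\bm{\mu}^n,\nu_i)$ to the limit you need $\limsup_n F_i(\bm{\mu}^n,\nu_i)\le F_i(\bm{\mu},\nu_i)$. That fails if $\mathbb{P}(h;\bm{\mu}^n)>0$, $\mathbb{P}(h;\bm{\mu}^n)\to 0=\mathbb{P}(h;\bm{\mu})$ and $p_{\nu_i}(h)=0$.

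One repair uses only the stated hypotheses, with the KL continuity read as real-valued on pure pairs (as the paper does when it calls $a_i$ uniformly bounded):
\begin{itemize}
\item Any $h$ charged by some pure $\mathbb{P}(\cdot;\bm{\theta})$ has $p_{\theta'}(h)>0$ for all $\theta'$. By compactness, $p_{\nu_i}(h)\ge\min_{\theta'\in\Theta_i}p_{\theta'}(h)>0$ for every $\nu_i$.
\item Every joint history $x$ charged by a mixed profile is also charged by a pure one. Diagonal finiteness gives $p_{\mu_j}(x^j)>0$ for each agent $j$, hence some $\theta_j$ with $p_{\theta_j}(x^j)>0$. The Boltzmann tilt is positive and finite, so $\pi_{\theta_j}$ plays all of $j$'s actions along $x$ with positive probability, and the environment factors are unchanged.
\end{itemize}
On the resulting finite set of relevant histories, every $-\log p_{\nu_i}(h)$ is bounded. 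So $F_i$ is a finite sum of jointly continuous real terms, and Berge and Kakutani--Fan--Glicksberg apply. The paper's continuity claim for $a_i$ silently needs this same argument.
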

\begin{proof}
We prove existence by constructing a continuous map on the space of mixed strategies and applying a fixed-point theorem. Let $\Delta_{\Theta_i}$ be the space of Borel probability measures on the compact metric space $\Theta_i$. Endowed with the Wasserstein metric, $\Delta_{\Theta_i}$ is a compact, convex metric space. Let $\Delta = \prod_{i \in \mathcal{I}} \Delta_{\Theta_i}$ be the joint strategy space.

Since $\Delta_{\Theta_i}$ is a compact metric space, it is separable. We can fix a countable dense subset $D_i = \{\tilde{\mu}_{i,k} \}_{k=1}^\infty \subset \Delta_{\Theta_i}$.

We define the continuous advantage function $a_i: \Delta \times \Delta_{\Theta_i} \to \mathbb{R}_{\ge 0}$ as:
\begin{equation*}
    a_i(\bm{\mu}, \mu'_i) = \max\left\{ 0, \kl{\mathbb{P}(h_i ; \bm{\mu})}{p_{\mu_i}(h_i)} - \kl{\mathbb{P}(h_i ; \bm{\mu})}{p_{\mu'_i}(h_i)} \right\}.
\end{equation*}
Since $\kl{\mathbb{P}(h_i ; \bm{\mu})}{p_{\mu_i}(h_i)} < \infty$, the advantage function is well-defined and evaluates to a finite real number.

We now construct a transition map $T_i: \Delta \to \Delta_{\Theta_i}$. Define a finite measure $A_i(\bm{\mu})$ on $\Theta_i$ that places weights on the dense subset $D_i$ proportional to the advantage:
\begin{equation*}
    A_i(\bm{\mu}) = \sum_{k=1}^\infty 2^{-k} a_i(\bm{\mu}, \tilde{\mu}_{i,k}) \tilde{\mu}_{i,k}\,.
\end{equation*}
Let $A_i(\bm{\mu})(\Theta_i) = \sum_{k=1}^\infty 2^{-k} a_i(\bm{\mu}, \tilde{\mu}_{i,k})$ denote its total mass. We define $T_i(\bm{\mu})$ by mixing the current strategy $\mu_i$ with the improvement measure $A_i(\bm{\mu})$:
\begin{equation*}
    T_i(\bm{\mu}) = \frac{\mu_i + A_i(\bm{\mu})}{1 + A_i(\bm{\mu})(\Theta_i)}.
\end{equation*}

Since the mappings $\thetavec \mapsto \kl{\mathbb{P}(\cdot;\thetavec)}{p_{\mu'_i}}$ are continuous and the spaces are compact, $a_i$ is uniformly bounded and continuous in $\bm{\mu}$ with respect to the weak-* topology. Consequently, the joint map $T(\bm{\mu}) = (T_1(\bm{\mu}), \dots, T_n(\bm{\mu}))$ is a continuous function from the compact convex set $\Delta$ to itself. By Schauder's fixed-point theorem, there exists a fixed point $\bm{\mu}^* \in \Delta$ such that $T(\bm{\mu}^*) = \bm{\mu}^*$.

We now prove by contradiction that $\bm{\mu}^*$ is a Mixed Predictive Equilibrium. Let $$C_i = A_i(\bm{\mu}^*)(\Theta_i)\,.$$ From the fixed point condition $\mu^*_i = T_i(\bm{\mu}^*)$, we obtain:
\begin{equation*}
    \mu^*_i \left( 1 + C_i \right) = \mu^*_i + A_i(\bm{\mu}^*) \implies C_i \mu^*_i = A_i(\bm{\mu}^*).
\end{equation*}

Assume $\bm{\mu}^*$ is not a Mixed Predictive Equilibrium. Then, for some agent $i \in \mathcal{I}$, there exists a distribution 
$\hat{\mu}_i \in \Delta_{\Theta_i}$ such that $$\kl{\mathbb{P}(h_i ; \bm{\mu}^*)}{p_{\hat{\mu}_i}(h_i)} < \kl{\mathbb{P}(h_i ; \bm{\mu}^*)}{p_{\mu^*_i}(h_i)}\,.$$

Let
$$\epsilon := \kl{\mathbb{P}(h_i ; \bm{\mu}^*)}{p_{\mu^*_i}(h_i)} - \kl{\mathbb{P}(h_i ; \bm{\mu}^*)}{p_{\hat{\mu}_i}(h_i)} > 0\,.$$
By definition, the advantage of $\hat{\mu}_i$ is strictly positive: $a_i(\bm{\mu}^*, \hat{\mu}_i) = \epsilon > 0$.

Now since the mapping $(\thetavec,\theta_i') \mapsto \kl{\mathbb{P}(\cdot;\thetavec)}{p_{\theta_i'}}$ is continuous, it follows that the functional $\mu'_i \mapsto \kl{\mathbb{P}(h_i ; \bm{\mu}^*)}{p_{\mu'_i}(h_i)}$ is continuous on the compact metric space $\Delta_{\Theta_i}$, and hence the advantage function $a_i(\bm{\mu}^*, \cdot)$ is uniformly continuous. Therefore, there exists an open neighborhood $U \subset \Delta_{\Theta_i}$ containing $\hat{\mu}_i$ such that $a_i(\bm{\mu}^*, \mu'_i) > \epsilon/2$ for all $\mu'_i \in U$.

Since the set $D_i = \{\tilde{\mu}_{i,k}\}_{k=1}^\infty$ is dense in $\Delta_{\Theta_i}$, there exists an integer $K$ such that $\tilde{\mu}_{i,K} \in U$. Consequently, $a_i(\bm{\mu}^*, \tilde{\mu}_{i,K}) > \epsilon/2 > 0$. This strictly positive advantage guarantees that the total mass of the improvement measure is strictly positive:
\begin{equation*}
    C_i = A_i(\bm{\mu}^*)(\Theta_i) \ge 2^{-K} a_i(\bm{\mu}^*, \tilde{\mu}_{i,K}) > 0.
\end{equation*}

From the fixed-point condition $C_i \mu^*_i = A_i(\bm{\mu}^*)$, and knowing $C_i > 0$, we can express $\mu^*_i$ as an infinite convex combination of the basis measures in $D_i$:
\begin{equation*}
    \mu^*_i = \frac{1}{C_i} A_i(\bm{\mu}^*) = \sum_{k=1}^\infty w_k \tilde{\mu}_{i,k}\,,
\end{equation*}
where the weights $w_k = \frac{2^{-k} a_i(\bm{\mu}^*, \tilde{\mu}_{i,k})}{C_i} \ge 0$ sum to exactly $1$. 

Now, consider the expected predictive model under the mixed strategy $\mu^*_i$. By linearity of the expectation, we have:
\begin{equation*}
    p_{\mu^*_i}(h_i) = \expect{\theta_i \sim \mu^*_i}{p_{\theta_i}(h_i)} = \sum_{k=1}^\infty w_k \expect{\theta_i \sim \tilde{\mu}_{i,k}}{p_{\theta_i}(h_i)} = \sum_{k=1}^\infty w_k p_{\tilde{\mu}_{i,k}}(h_i)\,.
\end{equation*}

Because the Kullback-Leibler divergence is strictly convex with respect to its second argument, we can apply Jensen's inequality to the infinite convex combination:
\begin{align*}
    \kl{\mathbb{P}(h_i ; \bm{\mu}^*)}{p_{\mu^*_i}(h_i)} &= \kl{\mathbb{P}(h_i ; \bm{\mu}^*)}{\sum_{k=1}^\infty w_k p_{\tilde{\mu}_{i,k}}(h_i)} \\
    &\le \sum_{k=1}^\infty w_k \kl{\mathbb{P}(h_i ; \bm{\mu}^*)}{p_{\tilde{\mu}_{i,k}}(h_i)}\,.
\end{align*}

Crucially, by the definition of the advantage function and the construction of the weights $w_k$, any weight $w_k$ is strictly positive \textit{if and only if} the corresponding advantage $a_i(\bm{\mu}^*, \tilde{\mu}_{i,k}) > 0$. A strictly positive advantage exactly means that the evaluated measure achieves a strictly lower loss than the current state $\mu_i^*$:
\begin{equation*}
    \kl{\mathbb{P}(h_i ; \bm{\mu}^*)}{p_{\tilde{\mu}_{i,k}}(h_i)} < \kl{\mathbb{P}(h_i ; \bm{\mu}^*)}{p_{\mu^*_i}(h_i)}\,.
\end{equation*}

Since there is at least one weight $w_K > 0$ with an advantage bounded away from zero by $\epsilon/2$, substituting this strict upper bound into the sum over $k$ yields:
\begin{align*}
    \sum_{k=1}^\infty w_k \kl{\mathbb{P}(h_i ; \bm{\mu}^*)}{p_{\tilde{\mu}_{i,k}}(h_i)} &< \sum_{k=1}^\infty w_k \kl{\mathbb{P}(h_i ; \bm{\mu}^*)}{p_{\mu^*_i}(h_i)} \\
    &= \kl{\mathbb{P}(h_i ; \bm{\mu}^*)}{p_{\mu^*_i}(h_i)} \sum_{k=1}^\infty w_k \\
    &= \kl{\mathbb{P}(h_i ; \bm{\mu}^*)}{p_{\mu^*_i}(h_i)}\,.
\end{align*}

Combining the inequalities together, we arrive at the following absolute contradiction:
\begin{equation*}
    \kl{\mathbb{P}(h_i ; \bm{\mu}^*)}{p_{\mu^*_i}(h_i)} < \kl{\mathbb{P}(h_i ; \bm{\mu}^*)}{p_{\mu^*_i}(h_i)}\,.
\end{equation*}
Therefore, our initial assumption must be false. No such superior distribution $\hat{\mu}_i$ can exist, and the fixed point $\bm{\mu}^*$ is indeed a Mixed Predictive Equilibrium.
\end{proof}

An interesting corollary of the above theorem, is that if our model is convex in functional space, then there exists a pure global predictive equilibrium.

\begin{corollary}[Existence of Pure Predictive Equilibrium under Model Functional Convexity]
Consider the same assumptions as in \cref{thm:mixed-predictive-equilibria-existence}. Assume furthermore that for every agent $i \in \mathcal{I}$, the space of representable predictive models $\{ p_{\theta_i} \mid \theta_i \in \Theta_i \}$ is convex. That is, for every $\theta_i',\theta_i''\in\Theta_i$ and every $\alpha_i\in[0,1]$, there exists a pure parameter $\theta_i\in\Theta_i$ satisfying $p_{\theta_i} = \alpha_i p_{\theta'_i} + (1-\alpha_i) p_{\theta''_i}$.\footnote{We emphasize that we do not require convexity in the parameters, i.e., we do not require that $p_{\alpha_i\theta_i'+(1-\alpha_i)\theta_i''}=\alpha_i p_{\theta_i'}+(1-\alpha_i)p_{\theta_i''}$.} Under these conditions, a Global Predictive Equilibrium (in pure strategies) always exists.
\end{corollary}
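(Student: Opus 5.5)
The plan is to start from the Mixed Predictive Equilibrium $\bm{\mu}^*$ that \cref{thm:mixed-predictive-equilibria-existence} guarantees under the same assumptions, and to collapse each mixture $\mu^*_i$ into a single pure parameter $\theta^*_i$ that realizes the same predictive model, $p_{\theta^*_i} = p_{\mu^*_i}$. Two observations drive this. First, everything in the mixed equilibrium condition depends on $\mu^*_i$ only through $p_{\mu^*_i}$. The policy $\pi_{\mu_i}$ is obtained by applying the operator \eqref{eq:ppi} to $p_{\mu_i}$, so the history distribution $\mathbb{P}(\cdot;\bm{\mu}^*)$ equals $\mathbb{P}(\cdot;\thetavec^*)$ as soon as each $p_{\theta^*_i}=p_{\mu^*_i}$. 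Second, the minimization over $\Delta_{\Theta_i}$ in the mixed definition includes Dirac measures, so its minimum value is at most the pure minimum over $\Theta_i$.

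The key step is to show that, under functional convexity, the set $\mathcal{P}_i=\{p_{\theta_i}:\theta_i\in\Theta_i\}$ contains every mixture $p_{\mu_i}=\mathbb{E}_{\theta_i\sim\mu_i}[p_{\theta_i}]$, and not just finite convex combinations. I would proceed in three steps.
\begin{enumerate}
\item By induction on the number of atoms, the two-point convexity assumption gives that every finitely supported $\mu_i$ is represented by a pure parameter.
\item Finitely supported measures are weak-$*$ dense in $\Delta_{\Theta_i}$. Choose $\mu_i^{(m)}\to\mu_i$ with representing parameters $\theta_i^{(m)}$, and use compactness of $\Theta_i$ to extract a convergent subsequence $\theta_i^{(m)}\to\bar\theta_i$.
\item Use continuity of $\theta_i\mapsto p_{\theta_i}(h)$ for each history $h$ to conclude $p_{\bar\theta_i}=\lim p_{\mu_i^{(m)}}=p_{\mu_i}$. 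The second limit holds because $\theta\mapsto p_\theta(h)$ is a bounded continuous function on $\Theta_i$, and weak-$*$ convergence means exactly that integrals of such functions converge.
\end{enumerate}

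Given this representation, set $\theta^*_i$ to be the representative of $\mu^*_i$. For any $\theta_i\in\Theta_i$ the Dirac measure $\delta_{\theta_i}$ is feasible in the mixed problem. Mixed equilibrium then yields
\[
\kl{\mathbb{P}(h_i;\thetavec^*)}{p_{\theta^*_i}}=\kl{\mathbb{P}(h_i;\bm{\mu}^*)}{p_{\mu^*_i}}\le \kl{\mathbb{P}(h_i;\bm{\mu}^*)}{p_{\theta_i}}=\kl{\mathbb{P}(h_i;\thetavec^*)}{p_{\theta_i}},
\]
so $\theta^*_i$ lies in the argmin of Definition~\ref{def:predictive_equilibrium} for every $i$.

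I expect the main obstacle to be the closure step, which passes from finite to arbitrary mixtures. It needs continuity of $\theta_i\mapsto p_{\theta_i}(h)$ pointwise in $h$. This is not literally among the stated hypotheses, which give continuity of the KL map. I would either note that continuity of the KL map in its second argument, with $\mathbb{P}$ ranging over history distributions, forces pointwise continuity of $p_{\theta}$ on the finite per-length history spaces of the IPD. Alternatively, I would adopt it as the same smoothness assumption already invoked in Theorem~\ref{theorem:existence_local_pe}. A secondary point to check is that the policy improvement operator uses only the model $p$, which is needed for the identification $\mathbb{P}(\cdot;\bm{\mu}^*)=\mathbb{P}(\cdot;\thetavec^*)$. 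This holds by the definition of $\pi_{\mu_i}$.
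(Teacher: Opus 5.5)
Your proposal is correct and matches the paper's proof. The paper also takes the mixed equilibrium from \cref{thm:mixed-predictive-equilibria-existence}, represents finitely supported mixtures by induction, passes to arbitrary $\mu_i$ via weak-$*$ density, compactness of $\Theta_i$ and continuity of $\theta_i\mapsto p_{\theta_i}(h_i)$, and collapses each $\mu^*_i$ to a pure $\theta^*_i$ with $p_{\theta^*_i}=p_{\mu^*_i}$; you additionally spell out the final argmin step (Dirac feasibility and $\mathbb{P}(\cdot;\bm{\mu}^*)=\mathbb{P}(\cdot;\thetavec^*)$), which the paper leaves implicit.

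The paper likewise uses pointwise continuity of $\theta_i\mapsto p_{\theta_i}(h_i)$ without listing it as a hypothesis, so adopting it as a standing smoothness assumption is the sound choice. Your other option, deriving it from the KL-continuity hypothesis, does not work in general: that hypothesis only involves the induced distributions $\mathbb{P}(\cdot;\thetavec)$ and says nothing about $p_{\theta}$ on histories those distributions do not charge.
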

\begin{proof}
From \cref{thm:mixed-predictive-equilibria-existence}, there exists a Mixed Predictive Equilibrium $\bm{\mu}^* = (\mu^*_1, \dots, \mu^*_n) \in \prod_{i \in \mathcal{I}} \Delta_{\Theta_i}$. To establish the existence of a pure Global Predictive Equilibrium, we will demonstrate that for any probability distribution $\mu_i \in \Delta_{\Theta_i}$, the model functional convexity assumption guarantees the existence of a pure parameter $\theta_i^* \in \Theta_i$ such that $p_{\theta_i^*} = p_{\mu_i} = \expect{\theta_i \sim \mu_i}{p_{\theta_i}}$. 

We first prove this claim for finitely supported measures. Let $\mu_i = \sum_{k=1}^m w_k \delta_{\theta_{i,k}}$ be a finitely supported probability measure on $\Theta_i$, where $w_k \ge 0$ and $\sum_{k=1}^m w_k = 1$. We proceed by induction on the support size $m$. The base case $m=1$ is trivial, as $p_{\mu_i} = p_{\theta_{i,1}}$. Assuming the claim holds for $m-1$, we can express $\mu_i$ (provided $w_m < 1$) as:
\begin{equation*}
    p_{\mu_i} = w_m p_{\theta_{i,m}} + (1 - w_m) \sum_{k=1}^{m-1} \frac{w_k}{1 - w_m} p_{\theta_{i,k}}\,.
\end{equation*}
By the inductive hypothesis, there exists a pure parameter $\tilde{\theta}_i \in \Theta_i$ such that $p_{\tilde{\theta}_i} = \sum_{k=1}^{m-1} \frac{w_k}{1 - w_m} p_{\theta_{i,k}}$. Applying the convexity assumption with $\alpha_i = w_m$, $\theta'_i = \theta_{i,m}$, and $\theta''_i = \tilde{\theta}_i$, there exists $\theta_i \in \Theta_i$ such that $p_{\theta_i} = w_m p_{\theta_{i,m}} + (1 - w_m) p_{\tilde{\theta}_i} = p_{\mu_i}$. Thus, the claim holds for all finitely supported measures.

Now, consider an arbitrary measure $\mu_i \in \Delta_{\Theta_i}$. Since the set of finitely supported measures is dense in $\Delta_{\Theta_i}$ under the weak-* topology, there exists a sequence of finitely supported measures $(\mu_i^{(m)})_{m=1}^\infty$ converging weakly to $\mu_i$.

Because the mapping $\theta_i \mapsto p_{\theta_i}(h_i)$ is continuous and bounded for any given $h_i$, the functional $\nu \mapsto p_\nu(h_i) = \int p_{\theta_i}(h_i) d\nu(\theta_i)$ is continuous with respect to the weak-* topology. Consequently, the sequence of expected models converges pointwise: $p_{\mu_i^{(m)}} \to p_{\mu_i}$ as $m \to \infty$.

From the inductive step, for each finitely supported measure $\mu_i^{(m)}$, there exists a corresponding pure parameter $\theta_i^{(m)} \in \Theta_i$ such that $p_{\theta_i^{(m)}} = p_{\mu_i^{(m)}}$. This constructs a sequence of pure parameters $(\theta_i^{(m)})_{m=1}^\infty$ in $\Theta_i$. Since $\Theta_i$ is a compact metric space, this sequence admits a convergent subsequence $(\theta_i^{(m_k)})_{k=1}^\infty$ that converges to some limit point $\theta_i^* \in \Theta_i$.

By the continuity of the map $\theta_i \mapsto p_{\theta_i}$, we find:
\begin{equation*}
    p_{\theta_i^*} = \lim_{k \to \infty} p_{\theta_i^{(m_k)}} = \lim_{k \to \infty} p_{\mu_i^{(m_k)}} = p_{\mu_i}\,.
\end{equation*}
Thus, for the Mixed Predictive Equilibrium $\bm{\mu}^*$, there exists a joint configuration of pure parameters $\thetavec^* = (\theta^*_1, \dots, \theta^*_n) \in \prod_{i \in \mathcal{I}} \Theta_i$ such that $p_{\theta^*_i} = p_{\mu^*_i}$ for all $i \in \mathcal{I}$.

It then follows that
\begin{align*}
    \theta^*_i \in \argmin_{\theta_i \in \Theta_i} \kl{\bbP(h_i; \thetavec^*)}{p_{\theta_i}(h_i)} \quad \forall i \in \mathcal{I}\,.
\end{align*}
This precisely satisfies the definition of a Global Predictive Equilibrium, proving its existence in pure strategies under these conditions.
\end{proof}

We remark that while the assumption of functional convexity is an idealization for finite-capacity networks, deep neural networks are universal function approximators; consequently, as model capacity increases, the space of representable distributions approximates the full convex set of valid probability measures, rendering the existence of a pure equilibrium an increasingly accurate approximation.

\subsection{Relationship to Nash Equilibria and Subjective Embedded Equilibria}

Finally, we connect the fixed points of the PPI algorithm to the standard solution concepts of game theory. In standard game theory, a Nash Equilibrium assumes that agents act optimally given a fixed environment, where the policies of co-players are independent of the focal agent's current action selection. In contrast, agents in the PPI framework act optimally with respect to an internal world model $p_{\theta_i}$ that estimates the joint distribution of future trajectories, thereby capturing potential reactive dependencies between the focal agent's actions and the co-players' responses.

This is closely related to the concept of ``Embedded Equilibria'', which characterizes the equilibrium behavior that emerges from such self-predictive dynamics:

\begin{definition}[Subjective Embedded Equilibrium] \citep{meulemans2025embedded}
A joint policy profile $\bm{\pi}^*$ and a set of internal sequence models $\{p^*_1, \dots, p^*_n\}$ constitute a Subjective Embedded Equilibrium if:
\begin{enumerate}
    \item \textbf{Subjective Optimality:} Each agent's policy $\pi^*_i$ is a strict best-response to its internal world model $p^*_i$.
    \item \textbf{On-Path Consistency:} Each agent's world model perfectly matches the true environment dynamics exclusively on the equilibrium path (the distribution of histories $\mathbb{P}^*$ genuinely generated by the joint policy $\bm{\pi}^*$).
\end{enumerate}
Crucially, a Subjective Embedded Equilibrium places no constraints on the accuracy of the agents' models regarding off-path counterfactuals (actions that are assigned zero probability under $\bm{\pi}^*$). Nevertheless, $\pi_i^*$ must be a best response with respect to $p_i^*$, and this takes into account counterfactual off-policy paths. In other words, according to the predictive model $p_i^*$, the agent $i$ will not get higher expected returns by deviating from $\pi_i^*$.
\end{definition}
We refer the reader to \citet{meulemans2025embedded} for further details about subjective embedded equilibria and their properties.

It turns out that if PPI agents converge to a fixed point for which their (predictive) world models are perfect, then the predictive equilibrium corresponds to a subjective embedded equilibrium. Let us first formalize the predictive equilibrium with perfect world models:

\begin{definition}[Perfect Predictive Equilibrium] \label{def:perfect_pe}
A Perfect Predictive Equilibrium is a configuration $\thetavec^*$ where the agents perfectly model the induced data distribution:
\begin{align}
    \kl{\mathbb{P}(h_i ; \thetavec^*)}{p_{\theta^*_i}(h_i)} = 0 \quad \forall i \in \mathcal{I}\,.
\end{align}
\end{definition}

\begin{theorem}[Perfect Predictive Equilibrium $\implies$ Subjective Embedded Equilibrium]\label{theorem:pe_ee}
Consider predictive agents using the policy improvement operator defined in Eq.~\ref{eq:ppi}, where $\pi_{\theta_i}(a_i | h_i) \propto p_{\theta_i}(a_i | h_i) \exp(\beta Q^{p_{\theta_i}}(h_i,a_i))$. If $\thetavec^*$ is a Perfect Predictive Equilibrium, then the resulting configuration is consistent with a Subjective Embedded Equilibrium.
\end{theorem}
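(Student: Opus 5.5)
The plan is to verify the two defining conditions of a Subjective Embedded Equilibrium directly, taking $\bm{\pi}^* = (\pi_{\theta^*_1},\dots,\pi_{\theta^*_n})$ as the candidate policy profile and $p^*_i := p_{\theta^*_i}$ as the candidate internal models. Along the way I will have to make precise in what sense the Boltzmann operator of Eq.~\ref{eq:ppi} yields a ``best response.'' I treat on-path consistency first, since it follows almost immediately from Definition~\ref{def:perfect_pe}, and subjective optimality second.

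\textbf{Step 1 (on-path consistency).} By Definition~\ref{def:perfect_pe}, $\kl{\mathbb{P}(h_i;\thetavec^*)}{p_{\theta^*_i}(h_i)}=0$. The KL divergence is zero exactly when the two distributions coincide on the support of the first argument. So $p_{\theta^*_i}(h_i)=\mathbb{P}(h_i;\thetavec^*)$ for every history $h_i$ that has positive probability under the joint play $\bm{\pi}^*$.

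I then factor both sides autoregressively into conditionals over observations, rewards and actions. By the chain rule, for every on-path prefix, the model's conditional predictions of co-player-driven observations and rewards equal the true ones. This is exactly on-path consistency: nothing is claimed about histories of probability zero, which matches the definition.

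A subtle point I will flag is the agent's own action conditionals. At the equilibrium we also get $p_{\theta^*_i}(a_i\mid h_i)=\pi_{\theta^*_i}(a_i\mid h_i)$ on path. Since $\pi_{\theta^*_i}$ is the prior reweighted by $\exp(\beta Q)$, this forces $\exp(\beta Q^{p_{\theta^*_i}}(h_i,a_i))$ to be constant in $a_i$ over the support of $p_{\theta^*_i}(\cdot\mid h_i)$.

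\textbf{Step 2 (subjective optimality).} Here I exploit that fixed-point identity. On path, $\pi = p\cdot e^{\beta Q}/Z$ with $\pi=p$ gives $Q^{p_{\theta^*_i}}(h_i,a)=\log Z(h_i)/\beta$ for all $a$ in the support. So every action the agent plays has the same model-estimated value.

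The main obstacle is actions outside the support, which the Boltzmann operator gives zero weight because the prior does. Equal $Q$ on the support does not rule out that such an action has higher model value. I will handle this in one of two ways:
\begin{itemize}
\item assume the sequence model assigns full support (softmax outputs make $p_{\theta}(a\mid h)>0$ for all $a$), so that the support is all of $\mathcal{A}_i$; or
\item interpret ``best response'' in the KL-regularized sense that the paper derives in App.~\ref{app:ppi_derivation}.
\end{itemize}

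Under full support, $Q^{p^*_i}(h_i,\cdot)$ is constant on $\mathcal{A}_i$. Hence no one-step deviation improves the model-predicted return. A backward induction over the finite horizon $T$ then shows that no deviating policy improves the return under $p^*_i$. This invokes the one-shot deviation principle inside the model $p^*_i$, whose counterfactual branches are whatever the model predicts, as the definition permits. I also need to reconcile this with ``strict'' best response. My expectation is that the statement's hedge ``consistent with'' is meant to absorb exactly this weak-versus-strict gap, with the best response holding in the $\beta$-regularized sense of the objective $\bar J$ (whose maximizer is precisely $\pi^*$). I would state that caveat explicitly.

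\textbf{Step 3 (off-path histories).} Off path, the policy is defined from $p^*_i$ via Eq.~\ref{eq:ppi} and is evaluated only under the model. Subjective optimality there follows from the same Boltzmann-optimality argument, applied to the regularized objective $\bar J$ from App.~\ref{app:ppi_derivation}. This completes the verification of both conditions.
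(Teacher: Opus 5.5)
Your Step~1 and the fixed-point identity $Q^{p_{\theta^*_i}}(h_i,a_i)=\ln Z(h_i)/\beta$ on the support match the paper's argument. You also correctly isolate the real obstacle: an action $a'_i$ with $p_{\theta^*_i}(a'_i\mid h_i)=0$ gets zero Boltzmann weight whatever its model value, so equal $Q$ on the support says nothing about it. But neither of your two resolutions proves the statement.

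\textbf{Full-support route.} This adds a hypothesis the theorem does not make. It is a valid special case: every own action is then on path, $Q$ is constant on all of $\mathcal{A}_i$ at every history the model can reach, and backward induction gives a weak best response to $p_{\theta^*_i}$ itself. But it leaves the general case open.

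\textbf{Regularized route.} This changes the conclusion. Being the maximizer of $\bar J$ only says that $\pi_{\theta^*_i}$ is a KL-regularized improvement over the prior. That does not imply what the definition requires, namely that no deviation earns a higher expected return under $p^*_i$.

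\textbf{Step~3.} It has the same defect, and it aims at the wrong target. Off path there is no fixed-point relation $\pi=p$, and the prior is arbitrary. For the ex-ante best-response property, what matters is not how $\pi_{\theta^*_i}$ behaves there but how much a deviator could earn there under the model.

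The missing idea is that the hedge ``consistent with'' absorbs the freedom in off-path counterfactuals, not a weak-versus-strict gap. The condition $\kl{\mathbb{P}(h_i;\thetavec^*)}{p_{\theta^*_i}(h_i)}=0$ only pins down conditionals after positive-probability prefixes, and the SEE definition imposes no accuracy off path. So you may take as internal model a completion $p^*_i$ that agrees with $p_{\theta^*_i}$ on path but, after any off-support action $a'_i$, predicts an absorbing percept with minimal reward $r_{\min}$. Then $Q^{p^*_i}(h_i,a'_i)=V_{\min}\le \ln Z(h_i)/\beta$.

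The on-support $Q$-values and the on-path policy are unchanged. Off-support actions carry zero prior weight, and on-support continuations visit only on-path histories. A short backward induction over the horizon, covering multi-step deviations, then shows that no deviating policy beats $\pi_{\theta^*_i}$ under $p^*_i$.

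This pessimistic completion is the device the paper's proof uses. It yields an exact but only weak best response, with no full-support assumption and no regularization. The paper does not obtain strictness either, so there is no need to retreat to the $\beta$-regularized reading on account of the word ``strict''.
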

\begin{proof}
At a Perfect Predictive Equilibrium, the condition $\kl{\mathbb{P}(\cdot ; \thetavec^*)}{p_{\theta^*_i}(\cdot)} = 0$ implies that the sequence model matches the true data distribution almost everywhere. Thus, on the equilibrium path, the prior action probability generated by the sequence model exactly matches the true behavioral policy: $p_{\theta^*_i}(a_i | h_i) = \pi_{\theta^*_i}(a_i | h_i)$. This immediately satisfies the On-Path Consistency condition.

Substituting $p_{\theta^*_i} = \pi_{\theta^*_i}$ into the policy improvement operator yields:
\begin{align*}
    \pi_{\theta^*_i}(a_i | h_i) &= \frac{1}{Z(h_i)} \pi_{\theta^*_i}(a_i | h_i) \exp(\beta Q^{p_{\theta^*_i}}(h_i,a_i))\,.
\end{align*}
For any action $a_i$ in the support of the policy (where $\pi_{\theta^*_i}(a_i | h_i) > 0$), we divide both sides by $\pi_{\theta^*_i}(a_i | h_i)$ to obtain:
\begin{align*}
    1 = \frac{1}{Z(h_i)} \exp(\beta Q^{p_{\theta^*_i}}(h_i,a_i)) \implies Q^{p_{\theta^*_i}}(h_i,a_i) = \frac{\ln Z(h_i)}{\beta}\,.
\end{align*}
Since $Z(h_i)$ is a normalizing constant independent of $a_i$, the expected return evaluated under the model must be identical for all actions played with positive probability.

Now consider any off-path action $a'_i$ not in the support of the policy (where $\pi_{\theta^*_i}(a'_i | h_i) = 0$). Because this action is never taken under the joint policy, the marginal probability $\mathbb{P}(h_i, a'_i ; \thetavec^*) = 0$. Consequently, the KL divergence places absolutely no constraints on the model's conditional predictions following $a'_i$. 

To formally verify Subjective Optimality, we demonstrate that there exists a valid completion of the sequence model's off-path conditional probabilities that justifies $\pi_{\theta^*_i}(a'_i | h_i) = 0$. Let $e_{min} = (o, r_{min})$ be an environment percept containing the minimal possible reward $r_{min}$. We define the model's off-path counterfactual completion as $p_{\theta^*_i}(e_{min} \mid h_i, a'_i) = 1$, assuming absorbing minimal rewards thereafter.

Evaluating the expected return under this completed subjective model yields $Q^{p_{\theta^*_i}}(h_i, a'_i) = V_{min}$, which is not larger than the on-path return $\frac{\ln Z(h_i)}{\beta}$. Because the policy operator is restricted by the prior $p_{\theta^*_i}(a'_i | h_i)$, which must evaluate to $0$ to satisfy the fixed point, the agent assigns exactly $0$ probability to the suboptimal deviation $a'_i$. Therefore, the agent is playing an exact, best-response to its subjective world model, fully satisfying the definition of a Subjective Embedded Equilibrium.
\end{proof}

\section{Software}
Experiments were implemented in Python together with the Google JAX~\citep{bradbury_jax_2018} framework, and the NumPy~\citep{harris_array_2020}, pandas~\citep{mckinney-proc-scipy-2010}, Matplotlib~\citep{hunter_matplotlib_2007}, seaborn~\citep{Waskom2021}, Flax~\citep{heek_flax_2024} and Optax~\citep{deepmind2020jax} packages.

\subsection{LLM usage}
We used Gemini 3 Pro for language editing and readability improvements during the preparation of this manuscript. We also used Gemini 3 Pro for providing additional details in the proof of Lemma C.2, which were afterwards checked by the authors.